\newenvironment{packeditemize}{
\begin{list}{$\bullet$}{
\setlength{\labelwidth}{8pt}
\setlength{\itemsep}{0pt}
\setlength{\leftmargin}{\labelwidth}
\addtolength{\leftmargin}{\labelsep}
\setlength{\parindent}{0pt}
\setlength{\listparindent}{\parindent}
\setlength{\parsep}{0pt}
\setlength{\topsep}{3pt}}}{\end{list}}
\newtheorem{theorem}{Theorem}
\newtheorem{proof}{Proof}
\renewcommand{\arraystretch}{1}
\newcommand{\SysName}{\texttt{REAT}\xspace}
\newcommand{\FunctionName}{\textit{RBL}\xspace}
\newcommand{\FeatureName}{\textit{TAIL}\xspace}
\definecolor{cvprblue}{rgb}{0.21,0.49,0.74}
\title{Alleviating the Effect of Data Imbalance on Adversarial Training}
\author{Guanlin Li \\
Nanyang Technological University, S-Lab\\
{\tt\small guanlin001@e.ntu.edu.sg}
\and
Guowen Xu\\
City University of Hong Kong\\
{\tt\small guowenxu@cityu.edu.hk}
\and
Tianwei Zhang\\
Nanyang Technological University\\
{\tt\small tianwei.zhang@ntu.edu.sg}
}
\begin{document}

\maketitle

\vspace{-10pt}
\begin{abstract}
\vspace{-10pt}
In this paper, we study adversarial training on datasets that obey the long-tailed distribution, which is practical but rarely explored in previous works. Compared with conventional adversarial training on balanced datasets, this process falls into the dilemma of generating uneven adversarial examples (AEs) and an unbalanced feature embedding space, causing the resulting model to exhibit low robustness and accuracy on tail data. To combat that, we theoretically analyze the lower bound of the robust risk to train a model on a long-tailed dataset to obtain the key challenges in addressing the aforementioned dilemmas. Based on it, we propose a new adversarial training framework -- \underline{Re}-balancing \underline{A}dversarial \underline{T}raining (\SysName). This framework consists of two components: (1) a new training strategy inspired by the effective number to guide the model to generate more balanced and informative AEs; (2) a carefully constructed penalty function to force a satisfactory feature space. Evaluation results on different datasets and model structures prove that \SysName can effectively enhance the model's robustness and preserve the model's clean accuracy. The code can be found in \url{https://github.com/GuanlinLee/REAT}. 
\end{abstract}

\vspace{-10pt}
\section{Introduction}
\vspace{-5pt}


Adversarial attacks~\cite{goodfellow_explaining_2015,madry_towards_2018} have become a serious threat to deep learning models, where the adversary crafts adversarial examples (AEs) by adding imperceptible perturbations to a clean input, to deceive the model into making wrong predictions. To mitigate adversarial attacks, a prominent way is adversarial training~\cite{madry_towards_2018}, which generates AEs and incorporates them into the training set to improve the model's \textit{adversarial robustness}. 

However, existing efforts of adversarial training mainly focus on balanced datasets, while ignoring more realistic datasets obeying long-tailed distributions~\cite{lin_focal_2017, cao_learning_2019,cui_class-balanced_2019}.  Informally, training data subject to a long-tailed distribution has the property that the vast majority of the data belong to a minority of total classes (i.e., ``head'' classes),   while the remaining data belong to other classes (``body'' and ``tail'' classes)~\cite{wang_learning_2017}. This distinct  nature  yields  new problems in adversarial training (see Section~\ref{sec:ltat} for detailed explanations). First, it is difficult to produce uniform and balanced adversarial examples (AEs): AEs are mainly misclassified by the model into the head classes with overwhelming probabilities regardless of the labels of their corresponding clean samples. Second, the excessive dominance of head classes in the feature embedding space further compresses the feature space of tail classes. The mutual entanglement of the above two problems leads to the underfitting of tail classes in both adversarial robustness and clean accuracy, thus leading to unsatisfactory training performance.

To address these challenges, \cite{wu_adversarial_2021} proposed RoBal, the first work (and the only work, to our best knowledge) towards adversarial training on datasets with long-tailed distributions.  It is essentially a two-stage re-balancing adversarial training method.  The first stage lies in the training process, where a new class-aware margin loss function is designed to make the model pay equal attention to data from head classes and tail classes. The second stage focuses on the inference process, where a pre-defined bias is added to the predicted logits vectors, thereby improving the prediction accuracy of samples from the tail classes.  Moreover, RoBal constructs a new normalized cosine classification layer, to further improve models' accuracy and adversarial robustness.

While RoBal shows impressive results on a variety of datasets, it still has several limitations. First, the robustness of RoBal benefits mainly from gradient obfuscation (specifically, gradient vanishing)~\cite{athalye_obfuscated_2018} in the proposed new scale-invariant classification layer. This can be easily compromised by simply multiplying the logits by a constant, as the constant can increase the absolute value of gradients against gradient vanishing and correct the sign of gradients during AE generation (see Tables~\ref{tab:cifar-10-robal} and~\ref{tab:cifar-100-robal}). Second, the designed class-aware margin loss ignores samples from body classes and exclusively focuses on head and tail classes, which inevitably reduces the overall model accuracy. Details can be found in Sections~\ref{sec:ltat} and~\ref{sec:mr}.

To advance the practicality of adversarial training on long-tailed datasets, we design a new framework: \underline{Re}-balancing \underline{a}dversarial \underline{t}raining (\SysName), which demonstrates higher clean accuracy and adversarial robustness compared to RoBal. Our insights come from the revisit of two key components in adversarial training: AE generation and feature embedding. Particularly, \textbf{for AE generation}, we force the generated AEs to be misclassified into each class as uniformly as possible, so that the information of the tail classes is sufficiently learned during adversarial training to improve the robustness. Our implementation is inspired by the effective number~\cite{cui_class-balanced_2019} in long-tailed recognition, which was proposed to increase the marginal benefits from data of tail classes. We generalize its definition to the AE generation process and propose a new \underline{R}e-\underline{B}alanced \underline{L}oss (\FunctionName) function. \FunctionName dynamically adjusts the weights assigned to each class, which significantly improves the effectiveness of the original balanced loss. \textbf{For feature embedding}, it is challenging to balance the volume of each class's feature space, especially if the size of each class varies. To address this issue,  we propose a \underline{T}ail-sample-mining-based fe\underline{a}ture marg\underline{i}n regu\underline{l}arization (\FeatureName) approach. \FeatureName treats the samples from tail classes as hard samples and optimizes feature embedding distributions of tail classes and others. To better fit the unbalanced data distribution, we propose a \textit{joint weight} to increase the contribution of tail features in the entire feature embedding space. Visualization results can be found in the supplementary materials. We conduct comprehensive experiments on CIFAR-10-LT, CIFAR-100-LT, and Tiny-Imagenet datasets to demonstrate the superiority of \SysName over existing methods. For instance, \SysName achieves 67.33\% clean accuracy and 32.08\% robust accuracy under AutoAttack, which are 1.25\% and 0.94\% higher than RoBal. 

We want to emphasize that \SysName is not a simple combination of previous works. First, our method is based on the theoretical analysis. Furthermore, its superiority lies in the newly proposed modifications over previous methods, to adapt to the long-tailed adversarial training scenario. We validate that simply integrating existing methods cannot achieve satisfactory results. For instance, without generalizing the definition of effective number from normal training to AE generation, the original term will result in much lower performance (``ENR'' in Table~\ref{tab:ab}). 

\vspace{-10pt}
\section{Background and Motivation}
\vspace{-5pt}

\subsection{Long-tailed Recognition}
\label{sec:longtail}

Data in the wild usually obey a long-tailed distribution~\cite{lin_focal_2017, cao_learning_2019,cui_class-balanced_2019}, where most samples belong to a small part of classes.  Models trained on long-tailed datasets usually give higher confidence to the samples from head classes, which harms the generalizability for the samples from the body or tail classes. It is challenging to solve such overconfidence issues under the long-tailed scenarios~\cite{japkowicz2002class,he2009learning,buda2018systematic}. Several approaches have been proposed to achieve long-tailed recognition. For instance, (1) the re-sampling methods~\cite{liu_exploratory_2009,han_borderline-smote_2005,ren_balanced_2020} generate balanced data distributions by sampling data with different frequencies in the training set. (2) The cost-sensitive learning methods~\cite{hong_disentangling_2021, cui_class-balanced_2019,lin_focal_2017} modify the training loss with additional weights to balance the gradients from each class. (3) The training phase decoupling methods~\cite{kang_decoupling_2020,kang_exploring_2021} first train a feature extractor on re-sampled balanced data, and then train a classifier on the original dataset. (4) The classifier designing methods~\cite{kang_decoupling_2020,wu_adversarial_2021} modify the classification layer with prior knowledge to better fit the unbalanced data. Morel details about related works are in the supplementary materials.

\vspace{-5pt}
\subsection{Long-tailed Adversarial Training}
\label{sec:ltat}

Adversarial training is a promising solution to enhance the model's robustness against AEs. Previous works mainly consider the balanced datasets. Discussions of these works can be found in the supplementary materials. When the training data become unbalanced, training a robust model becomes more challenging. As mentioned in Section \ref{sec:longtail}, in long-tailed recognition, most data come from the head classes while the data of the tail classes are relatively scarce. This causes two consequences: unbalance in the output probability space and unbalance in the feature embedding space, which are detailed as follows.

First, we need to generate AEs on-the-fly during adversarial training. The unbalanced output probability space caused by long-tailed datasets can lead to unbalanced AEs, which cause the produced model to show unbalanced robustness across different classes. Figure~\ref{fig:intro} shows such an example. We adopt PGD-based adversarial training to train a ResNet-18 model and measure the distribution of the model's predictions for the generated AEs during the training process. Figure~\ref{fig:int1} shows the case of a balanced training set (CIFAR-10). We observe that the predictions of the AEs are \textbf{uniformly distributed} among all the classes. In contrast, Figure~\ref{fig:int2} shows the case of an unbalanced training set (CIFAR-10-LT). Due to the long-tailed distribution, \textbf{most AEs are labeled as head classes}. This indicates that the final model has lower accuracy and robustness for tail classes, making it more vulnerable to adversarial attacks~\cite{croce_reliable_2020}.

\begin{figure} 
\begin{minipage}[t]{0.5\textwidth}
    \begin{subfigure}[b]{0.48\linewidth}
    \captionsetup{font=scriptsize}
    \centering
    \includegraphics[width=\linewidth]{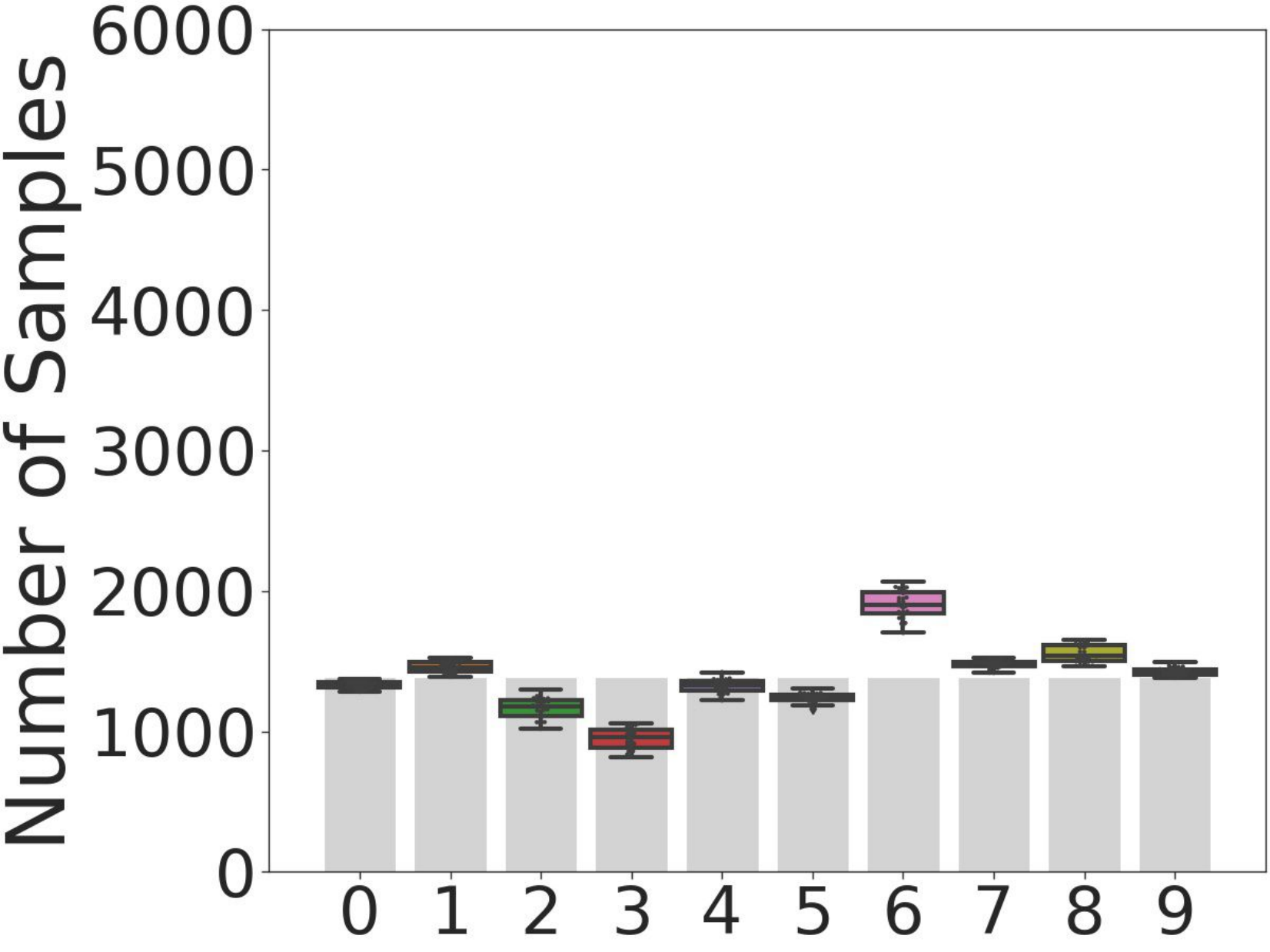} 
    \vspace{-10pt}
    \caption{Balanced dataset}
    \label{fig:int1} 
  \end{subfigure} 
      \begin{subfigure}[b]{0.48\linewidth}
      \captionsetup{font=scriptsize}
    \centering
    \includegraphics[width=\linewidth]{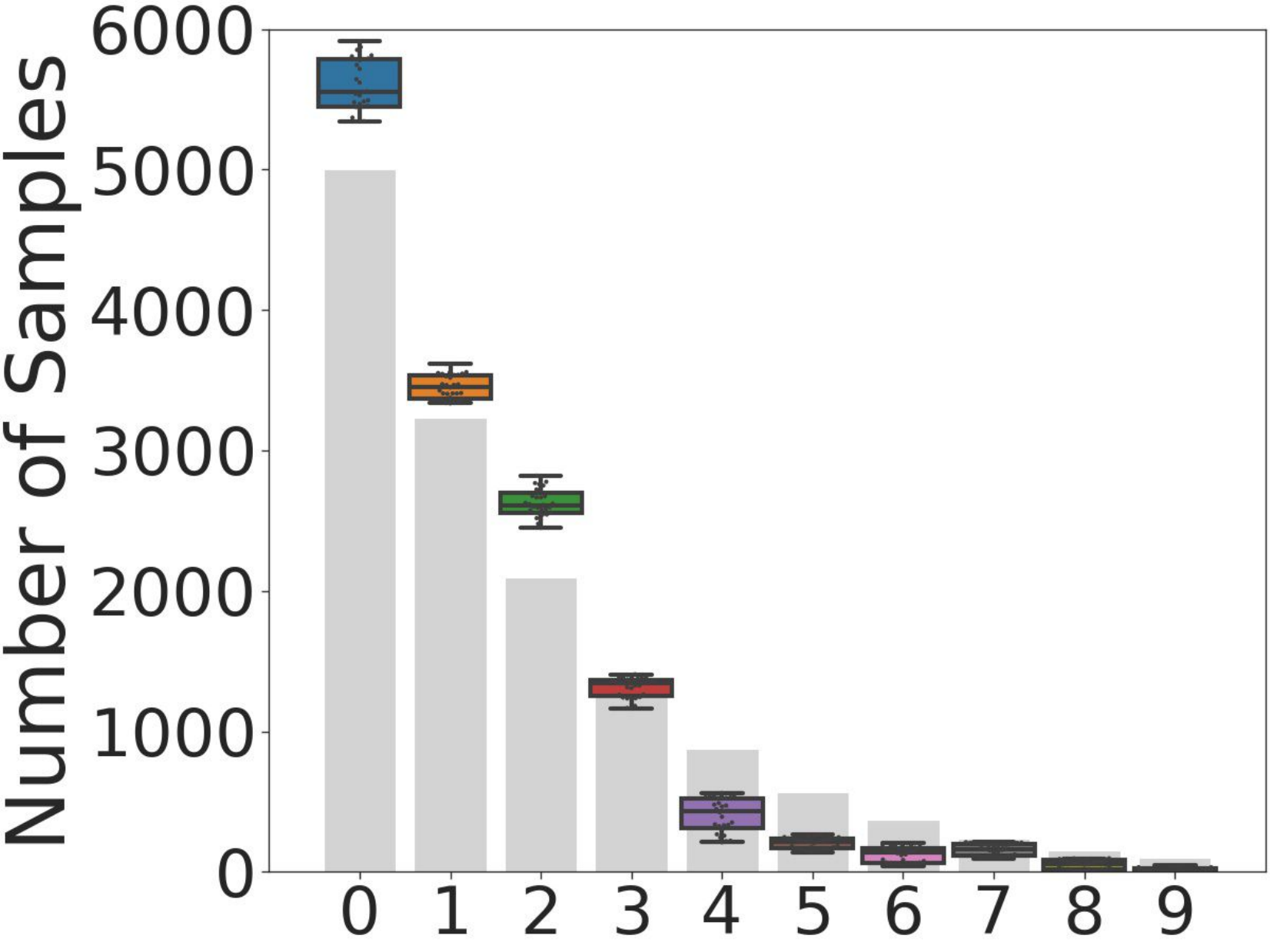}
    \vspace{-10pt}
    \caption{Unbalanced dataset}
    \label{fig:int2} 
  \end{subfigure} 
  \vspace{-5pt}
    \captionsetup{font=small}
  \caption{Prediction distributions of AEs. Clean label distributions are shown by gray bars.} 
  \label{fig:intro} 
  \end{minipage}
  \hspace{10pt}
  \begin{minipage}[t]{0.5\textwidth}
  \centering
    \begin{subfigure}[b]{0.45\linewidth}
    \captionsetup{font=scriptsize}
    \centering
    \includegraphics[width=\linewidth]{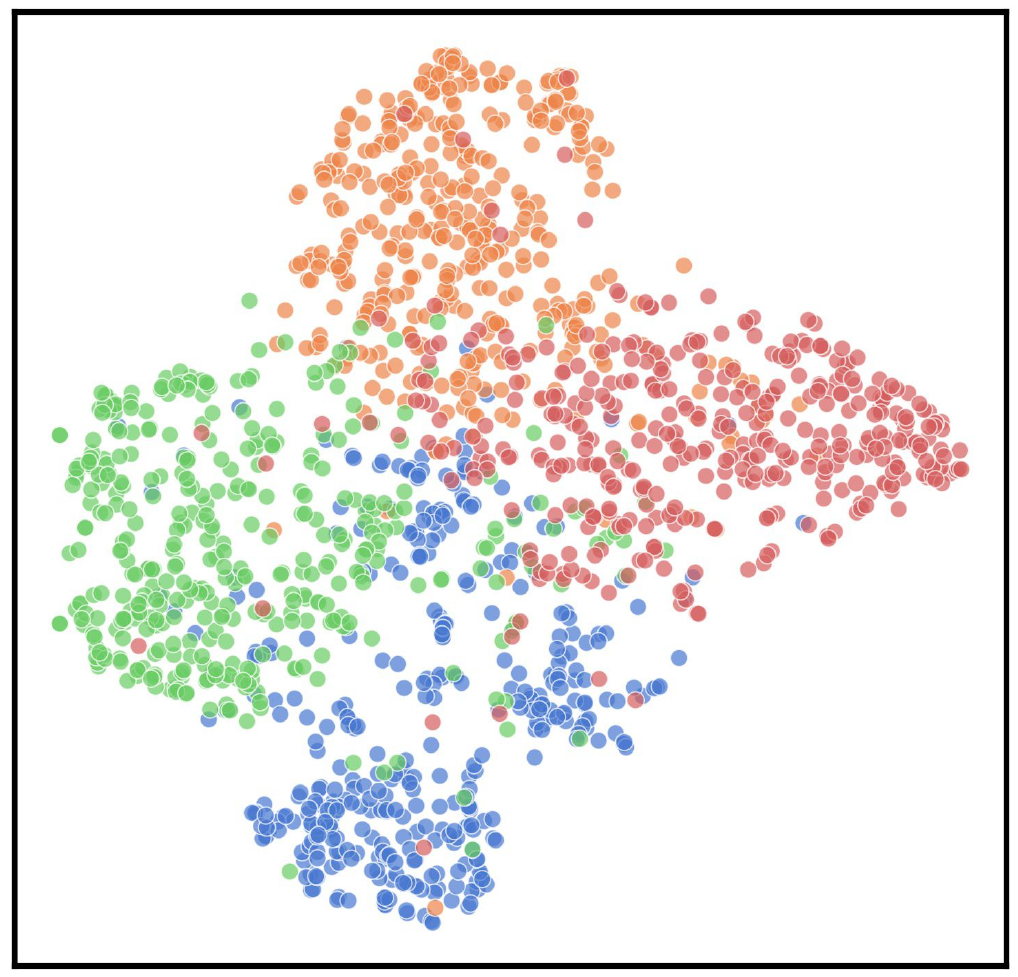} 
    \vspace{-10pt}
    \caption{Balanced dataset}
    \label{fig:ts1} 
  \end{subfigure} 
      \begin{subfigure}[b]{0.45\linewidth}
      \captionsetup{font=scriptsize}
    \centering
    \includegraphics[width=\linewidth]{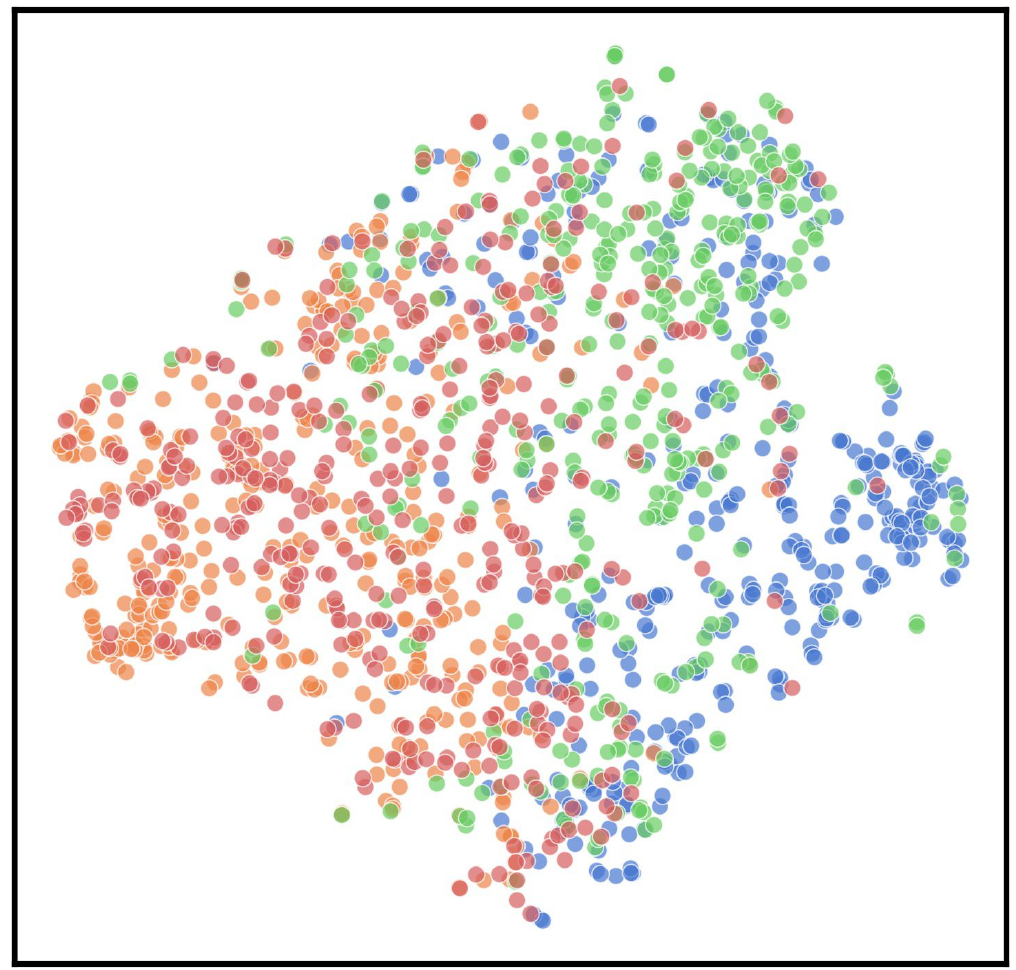}
    \vspace{-10pt}
    \caption{Unbalanced dataset}
    \label{fig:ts2} 
  \end{subfigure} 
  \vspace{-5pt}
\captionsetup{font=small}
  \caption{Visualization of feature maps of AEs from models.} 
  \label{fig:ts} 
  \vspace{-5pt}
  \end{minipage}
  \vspace{-10pt}
\end{figure}

Second, in an unbalanced training set, the head classes can dominate the feature embedding space of the model, which can reduce the area of tail features. As a result, the performance and generalizability of the model for tail classes will be decreased. In contrast, a model trained on balanced data will give an even feature space for each class. Figure~\ref{fig:ts} compares the feature maps of AEs in these two scenarios, where we train ResNet-18 models with PGD-based adversarial training on balanced and unbalanced CIFAR-10\footnote{For better readability, we only show four classes (two head classes ``airplane'' (blue) and ``automobile'' (orange), and two tail classes ``ship'' (green) and ``truck'' (red).). The complete feature maps for 10 classes can be found in the supplementary materials.}. \textbf{The long-tailed scenario has larger differences between head and tail features compared to the balanced scenario. }

A straightforward way is to directly adopt existing solutions introduced in Section~\ref{sec:longtail} (e.g., \cite{lin_focal_2017,cao_learning_2019,cui_class-balanced_2019,ren_balanced_2020}) for adversarial training, which can produce more balanced AE prediction distributions and feature embedding space. However, they can only partially address the overconfidence and underconfidence issues in model prediction, due to the lack of tail samples and AEs predicted as tail classes
(see Section~\ref{sec:mr} and the supplementary materials). RoBal \cite{wu_adversarial_2021} is the first methodology dedicated to adversarial training with long-tailed datasets. It introduces a new loss function to promote the model to learn features from head classes and tail classes equally. It further replaces the traditional classification layer with a cosine classifier, where both weights and features are normalized and the outputs are multiplied by a temperature factor. In the inference phase, RoBal adjusts the output logits with a prior distribution, which is aligned with the label distribution. However, in our experiments, we find RoBal ignores the features from the body classes, which can harm the clean accuracy and robustness. Furthermore, RoBal can be easily defeated by a simple \textbf{adaptive attack}, which multiplies the output logits by a factor when generating AEs (see Section~\ref{sec:mr}). This motivates us to explore a better solution for long-tailed adversarial training.

\section{Theoretical Analysis}
\label{sec:ta}

Besides providing empirical studies in Section~\ref{sec:ltat}, we provide theoretical analysis to give a lower bound for robust risk, aiming to support the most vulnerable parts in models trained on a long-tailed dataset found in Section~\ref{sec:ltat}. To theoretically study the difficulties of adversarially training a model on unbalanced datasets, we first consider the robust risk $\mathcal{R}_{\mathrm{rob}}(\theta)$, where $\theta$ stands for the parameters of a function $f$, which can be a deep learning model. Let $\mathcal{X}_i \subset \mathbb{R}^d$ be the input space for class $i$, and $\mathcal{Y}=\{1, \dots, C\}$ be the class set. 

We consider a following dataset $\mathcal{D}=\{(x,i)\vert x\in \mathcal{X}_i, i\in \mathcal{Y}\}$, that for class $i$, we only have $n_i$ data in it. Then, we define that $p_\theta(\cdot \vert x) = \mathrm{softmax}(f_\theta(x)) \in \mathbb{R}^C$ and $F_\theta(x) = \mathrm{argmax}[f_\theta(x)] \in \mathcal{Y}$. Given a perturbed set $\mathcal{B}_p(x_i,\epsilon) = \{ x_i' \in \mathcal{X}_i \vert \Vert x_i - x_i' \Vert_p \le \epsilon\}$, we have $\mathcal{R}_{\mathrm{rob}}(\theta) = \mathbb{E}_\mathcal{D}[\max_{x_i' \in \mathcal{B}_p(x_i,\epsilon)}\mathbbm{1}\{F_\theta (x_i') \neq i\}]$. 

Assume that for each class $i$, its $p_\theta(\cdot\vert x_i)$ is independent to other class $j\neq i$. We have $\mathcal{R}_{\mathrm{rob}}(\theta) = \sum_{i=1}^C \frac{1}{n_i}\sum_{j=1}^{n_i}\max_{x' \in \mathcal{B}_p(x_i^j,\epsilon)}\mathbbm{1}\{F_\theta (x') \neq i\}p(i)$. Let $\mathcal{R}^i_{\mathrm{rob}}(\theta) = \sum_{j=1}^{n_i}\max_{x' \in \mathcal{B}_p(x_i^j,\epsilon)}\mathbbm{1}\{F_\theta (x') \neq i\}p(i)$, which can be further decomposed of two terms, i.e., the natural risk and the boundary risk~\citep{zhang_theoretically_2019}, i.e., 
{\small\begin{align*}
    \mathcal{R}^i_{\mathrm{rob}}(\theta) = \mathcal{R}^i_{\mathrm{nat}}(\theta) + \mathcal{R}^i_{\mathrm{bdy}}(\theta) = \sum_{j=1}^{n_i}\mathbbm{1}\{F_\theta (x_i^j) \neq i\}p(i) \\
    + \sum_{j=1}^{n_i}\mathbbm{1}\{\exists{x' \in \mathcal{B}_p(x_i^j,\epsilon)},F_\theta (x') \neq i\}p(i).
\end{align*}}%
We notice that for an unbalanced dataset, $p(i) = \frac{n_i}{\sum_{i=1}^C n_i}$. Therefore, we have the following lower bound for the robust risk $\mathcal{R}_{\mathrm{rob}}(\theta)$.

\begin{theorem}\label{the:1}
Given a function $f_\theta$ and dataset $\mathcal{D}$, which we defined above, let 
\begin{align*}
    g(x) \in \mathrm{argmax}_{x' \in \mathcal{B}_p (x,\epsilon)}\mathbbm{1}\{F_\theta(x) \neq F_\theta(x')\}
\end{align*}
Assume that a larger $n_i$ will decrease the $\mathcal{R}^i_{\mathrm{rob}}(\theta)$, and vice versa. Then, we have
{\small\begin{align*}
    \mathcal{R}_{\mathrm{rob}}(\theta) \ge \sum_{i=1}^C \frac{1}{C} ( &\sum_{j=1}^{n_i}\mathbbm{1}\{F_\theta (x_i^j) \neq i\} \\
    &+ \sum_{j=1}^{n_i}\mathbbm{1}\{F_\theta (g(x_i^j)) \neq F_\theta(x_i^j)\} )
\end{align*}}%
\end{theorem}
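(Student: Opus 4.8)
The plan is to push the expectation through the per-class decomposition already assembled above, convert the maximum over the perturbation ball into a single evaluation at the maximizer $g$, and finally trade the empirical class frequencies $p(i)=n_i/N$ (with $N=\sum_{k=1}^{C} n_k$) for the uniform weight $1/C$ using the stated monotonicity assumption. Throughout I would keep the per-class error counts fixed and treat only the class weights as the objects to be compared.

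First I would start from $\mathcal{R}_{\mathrm{rob}}(\theta)=\sum_{i=1}^{C}\mathcal{R}^i_{\mathrm{rob}}(\theta)$ together with the natural/boundary split, reading the boundary event in the prediction-based form (a correctly classified clean point whose prediction is flipped by some $x'\in\mathcal{B}_p(x_i^j,\epsilon)$) so that the split is genuinely disjoint. The natural summand $\sum_{j=1}^{n_i}\mathbbm{1}\{F_\theta(x_i^j)\neq i\}$ is already the first term on the right-hand side, so no work is needed there beyond carrying the weight. For the boundary summand I would use that $g$ is an $\mathrm{argmax}$ of a $\{0,1\}$-valued indicator: by definition $\mathbbm{1}\{F_\theta(g(x_i^j))\neq F_\theta(x_i^j)\}=\max_{x'\in\mathcal{B}_p(x_i^j,\epsilon)}\mathbbm{1}\{F_\theta(x_i^j)\neq F_\theta(x')\}$, so the existence of a prediction-flipping perturbation is captured exactly by this single evaluation, which is precisely the second term in the claim. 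The only bookkeeping is the passage between the events $F_\theta(x')\neq i$ and $F_\theta(x')\neq F_\theta(x_i^j)$; these agree on correctly classified points, and on already-misclassified points the natural term carries the contribution.

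The crux is the final reweighting from $p(i)=n_i/N$ to $1/C$. I would view $\mathcal{R}^i_{\mathrm{rob}}(\theta)$ as a function of the class size $n_i$ under the fixed budget $\sum_i n_i=N$ and invoke the assumption that this function is decreasing in $n_i$, comparing the actual long-tailed profile against the balanced profile $n_i\equiv N/C$ that corresponds to the weight $1/C$. The intended conclusion, $\sum_i \mathcal{R}^i_{\mathrm{rob}}(\theta)\ge \sum_i(\mathrm{count}_i)\cdot\tfrac{1}{C}$, would then follow once the balanced profile is shown to be extremal among all profiles with the same budget.

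The hard part will be exactly this reweighting, and I would treat it as the load-bearing step rather than routine algebra. Pure monotonicity only tells us that tail classes ($n_i<N/C$) carry more per-class risk than in the balanced profile while head classes ($n_i>N/C$) carry less, so it gives a two-sided trade-off around $N/C$ rather than a one-sided inequality; turning it into a uniform-weighted lower bound genuinely requires reading the hypothesis as a convexity/majorization statement on $n_i\mapsto\mathcal{R}^i_{\mathrm{rob}}(\theta)$, not mere monotonicity, and requires holding the natural and $g$-boundary counts fixed while only the weights are rebalanced. I would therefore state this convexity explicitly as the operative form of the assumption and verify that the extremal comparison against the balanced profile closes term by term; once that is granted, the remaining bookkeeping from the first two steps assembles the stated bound.
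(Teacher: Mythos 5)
Your first two steps track the paper's proof, but your insistence on a \emph{genuinely disjoint} natural/boundary split creates a gap you cannot later close. Write $\mathcal{A}=\mathbbm{1}\{F_\theta(x_i^j)\neq i\}$ and $\mathcal{B}=\mathbbm{1}\{F_\theta(g(x_i^j))\neq F_\theta(x_i^j)\}$. The disjoint (TRADES-style) accounting you describe yields the per-point contribution $\mathcal{A}+(1-\mathcal{A})\mathcal{B}$, whereas the theorem's right-hand side charges $\mathcal{A}+\mathcal{B}$, which is strictly larger whenever $\mathcal{A}=\mathcal{B}=1$; no bookkeeping bridges this pointwise. The paper reaches $\mathcal{A}+\mathcal{B}$ precisely because it does \emph{not} split disjointly: it takes $\mathcal{R}^i_{\mathrm{bdy}}$ to be the full existence event, lower-bounds it by $\mathcal{B}\mathcal{A}+\mathcal{B}(1-\mathcal{A})=\mathcal{B}$ (i.e.\ it keeps the flip contribution also on already-misclassified points), and then adds $\mathcal{R}^i_{\mathrm{nat}}$ on top. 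Your remark that ``on already-misclassified points the natural term carries the contribution'' discards exactly the copy of $\mathcal{B}$ that the stated bound requires.

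The larger gap is the final reweighting. The paper closes it in one line by invoking the assumption together with the Chebyshev \emph{sum} inequality: treating $\{p(i)\}$ and the per-class counts $\{T_i\}$ as ordered sequences, similarly ordered sequences satisfy $\sum_i p(i)T_i \ge \frac{1}{C}\bigl(\sum_i p(i)\bigr)\bigl(\sum_i T_i\bigr)=\frac{1}{C}\sum_i T_i$. Your plan instead compares the long-tailed profile against a balanced profile $n_i\equiv N/C$, but this is a category mismatch: the right-hand side $\frac{1}{C}\sum_i T_i$ is not the risk of any hypothetical balanced dataset --- the counts $T_i$ are fixed, produced by the same model on the same long-tailed data, and only the \emph{weights} change from $p(i)$ to $1/C$. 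What must be shown is $\sum_i\bigl(p(i)-\frac{1}{C}\bigr)T_i\ge 0$, a correlation statement about two fixed sequences, which is exactly what Chebyshev's sum inequality formalizes and what an extremality-across-profiles argument does not address. Your observation that bare monotonicity gives only a two-sided trade-off is sound (and sharper than the paper's terse treatment), but your proposed repair --- convexity of $n\mapsto\mathcal{R}^i_{\mathrm{rob}}$ --- still fails to deliver the needed ordering: for instance $\mathcal{R}^i_{\mathrm{rob}}=\phi(n_i)$ with $\phi(n)=1/n$ is decreasing and convex, yet it makes tail classes carry the larger counts $T_i$, and the target inequality reverses. (To be fair, the paper's own one-liner is fragile for the same reason: Chebyshev needs $p(i)$ and $T_i$ \emph{similarly} ordered, which must be read into the assumption; but that is the tool its proof uses, and your route replaces it with one that provably cannot close the step as set up.)
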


The proof can be found in the supplementary materials. In Theorem~\ref{the:1}, $\mathbbm{1}\{F_\theta (x_i^j) \neq i\}$ is the error for clean data $x_i^j$, and $\mathbbm{1}\{F_\theta (g(x_i^j)) \neq F_\theta(x_i^j)\}$ is a general robust error, which means that the prediction of the perturbed data disagrees with the prediction of the clean data. Specifically, $\mathbbm{1}\{F_\theta (x_i^j) \neq i\}$ reflects the \textbf{representation entangling} to some degree. And $g(x)$ can represent \textbf{the predicted label distribution of adversarial examples}. These two terms are aligned with the findings in Section~\ref{sec:ltat}. Therefore, the theoretical analysis supports the motivation of balancing the label distribution of adversarial examples and aligning feature representations.

\section{Methodology}

We introduce \SysName, a new framework for adversarial training on unbalanced datasets. \SysName includes two innovations to address the two issues disclosed in Section~\ref{sec:ltat} and \ref{sec:ta}. Specifically, to balance the AE distribution and encourage the model to learn more information from tail samples, we modify the objective function in the AE generation process with weights calculated based on the effective number~\cite{cui_class-balanced_2019}. To balance the feature embedding space, we propose a loss term to increase the area of features from tail classes. 

\noindent\textbf{Preliminaries}. We first give formal definitions of a long-tailed dataset. Consider a dataset containing $C$ classes with $N_i$ samples in each class $i$. We assume the classes are sorted in the descending order based on the number of samples in each class, i.e., $N_i \ge N_{i+1}$. The unbalanced ratio is defined as $\mathrm{UR} = \frac{N_1}{N_C}$~\cite{cao_learning_2019}. Following previous works~\cite{wang_learning_2017,cui_class-balanced_2019}, a long-tailed dataset can be divided into three parts: (1) $i$ is a head class (\textit{HC}) if $1 \le i \le \lfloor \frac{C}{3}\rfloor$, where $\lfloor x \rfloor$ is a floor function; (2) $i$ is a tail class (\textit{TC}) if $\lceil \frac{2C}{3}\rceil \le i \le C$, where $\lceil x \rceil$ is a ceiling function; (3) The rest are body classes (\textit{BC}). \mbox{Below, we describe the detailed mechanisms of \SysName.}

\subsection{Re-balancing AEs}
\label{sec:rbl}

For adversarial training,  it is desirable that the objective function could encourage AEs that are classified into rarely-seen classes while punishing AEs that are classified into abundant classes. To realize this in the long-tailed scenario, we borrow the idea of the effective number from \cite{cui_class-balanced_2019} and generalize it to adversarial training. The effective number is mainly used to measure the data overlap of each class. For class $i$ containing $N_i$ data, its effective number is defined as $E_{N_i} = \frac{1-\beta^{N_i}}{1-\beta}$, where $\beta=\frac{\sum N_i -1}{\sum N_i}$. Given the effective numbers $E_{N_i}$ and  $E_{N_j}$, if  $E_{N_i}>E_{N_j}$, the marginal benefit obtained from increasing the number of training samples in class $i$ is less than increasing the same number of training samples in class $j$ \cite{cui_class-balanced_2019}. This implies that we can adopt the effective number as a guide to balance the distribution of AEs generated during training. 

\noindent\textbf{Motivation.} We explain the motivation of using the effective number in AE generation as follows. At a high level, \textit{the generation of AEs can be viewed as a data sampling process}, i.e., AEs are essentially sampled from the neighbors of their corresponding clean samples. Therefore, we can calculate the effective number between AEs generated in two consecutive epochs, and use it as the basis to assign dynamic weights to each class in the loss function, inducing the model to produce as many less overlapped AEs as possible in consecutive epochs. This implicitly generates more AEs that are classified into tail classes and makes the model extract more marginal benefits from samples of tail classes, thus achieving our purpose.

\noindent\textbf{Technical design}. For simplicity, we assume that the predicted label distributions (i.e., labels assigned by the model $M$ for AEs) in two successive training epochs will stay stable and not change too much, \textbf{which has been proven in~\cite{efficientat}.} Then, in epoch $k-1$, we count the number of AEs that are classified into each class, denoted as $\mathbf{n} = [n_1, n_2, \dots, n_C]$.\footnote{For the first training epoch, we directly use the number of clean data $N_i$ in each class as the prior distribution.}. As a result, generating AEs in epoch $k$ can be approximated as sampling new AEs after sampling $n_i$ data for each class $i$. Therefore, we can compute the  effective number of class $i$ as  $E_{n_i} = \frac{1-\beta_i^{n_i}}{1-\beta_i}$, where $\beta_i=\frac{N_i -1}{N_i}$. Note that our $\beta$ is \textit{class-related} to assign finer convergence parameters for each class, which is different from the calculation in~\cite{cui_class-balanced_2019}. We will experimentally prove that this adaptive effective number can better improve the model robustness in Section~\ref{sec:ab}.

Based on the property that the effective number of each class is inversely proportional to the marginal benefit of the new samples of this class, we construct a new indicator variable weight $w_i$ for the marginal benefit, which is inversely proportional to $E_{n_i}$. This weight can be used to correct the loss in the AE generation process. Specifically, following the class-balanced softmax cross-entropy loss proposed in~\cite{cui_class-balanced_2019}, we compute the weight $w_i$ for class $i$ as follows:
{\small\begin{align}
    w_i = \frac{C}{E_{n_i}\sum_{j=1}^C\frac{1}{E_{n_j}}}\label{eq:weight}
\end{align}}%
With the weight $w_i$ for each class $i$,  we design a new \underline{R}e-\underline{B}alancing \underline{L}oss (\FunctionName) function as below:
{\small\begin{align}
    \mathrm{\FunctionName} = - w_i *\log \frac{e^{z_i}}{\sum_j e^{z_j}}\label{eq:rbl}
\end{align}}%
where $\log \frac{e^{z_i}}{\sum_j e^{z_j}}$ is the original loss function adopted to generate AEs. Our goal is to maximize \FunctionName to generate AEs for adversarial training.

\noindent\textbf{Analysis}.
We analyze why \FunctionName can help generate balanced AEs from unbalanced data samples. First, we show the effective number enjoys the \textit{asymptotic properties}: (1) when $n_i \to 0$, we have $E_{n_i}\to 0$ and $w_i \to C$; (2) when $n_i \to \infty$, we have $E_{n_i}\to \frac{1}{1 - \beta_i}$ and $w_i \to 0$, as there exists $E_{n_j}\to 0, i\neq j$. Based on the asymptotic properties, if there are many AEs assigned to the label of class $i$ in epoch $k-1$, then in epoch $k$, the increased effective number $E_{n_i}$ results in a smaller $w_i$. As a consequence,   \FunctionName will induce  AEs generated in this round  with minimized  data overlap compared to AEs of the previous round, which implicitly generates more AEs that are classified into other classes. Our experiments in Section~\ref{sec:mr} indicate that combined with long-tailed recognition losses, \FunctionName can better balance the AE generation process and increase the number of AEs predicted into tail classes by $5\times$. Figure~\ref{fig:overlap} compares the distances of AEs between two consecutive training epochs without and with \FunctionName. The models (ResNet-18) are trained on CIFAR-10-LT with the unbalanced ratio UR=50. A smaller distance indicates a larger overlap of the two AEs and less marginal benefit the model can obtain from the process. We observe that \FunctionName is able to increase the distances of AEs from tail classes, and generate more informative AEs to enhance the model's robustness.

\subsection{Tail Feature Alignment}
\label{sec:tail}

\begin{figure}
\centering
    \begin{subfigure}[b]{0.48\linewidth}
    \centering
    \includegraphics[width=\linewidth]{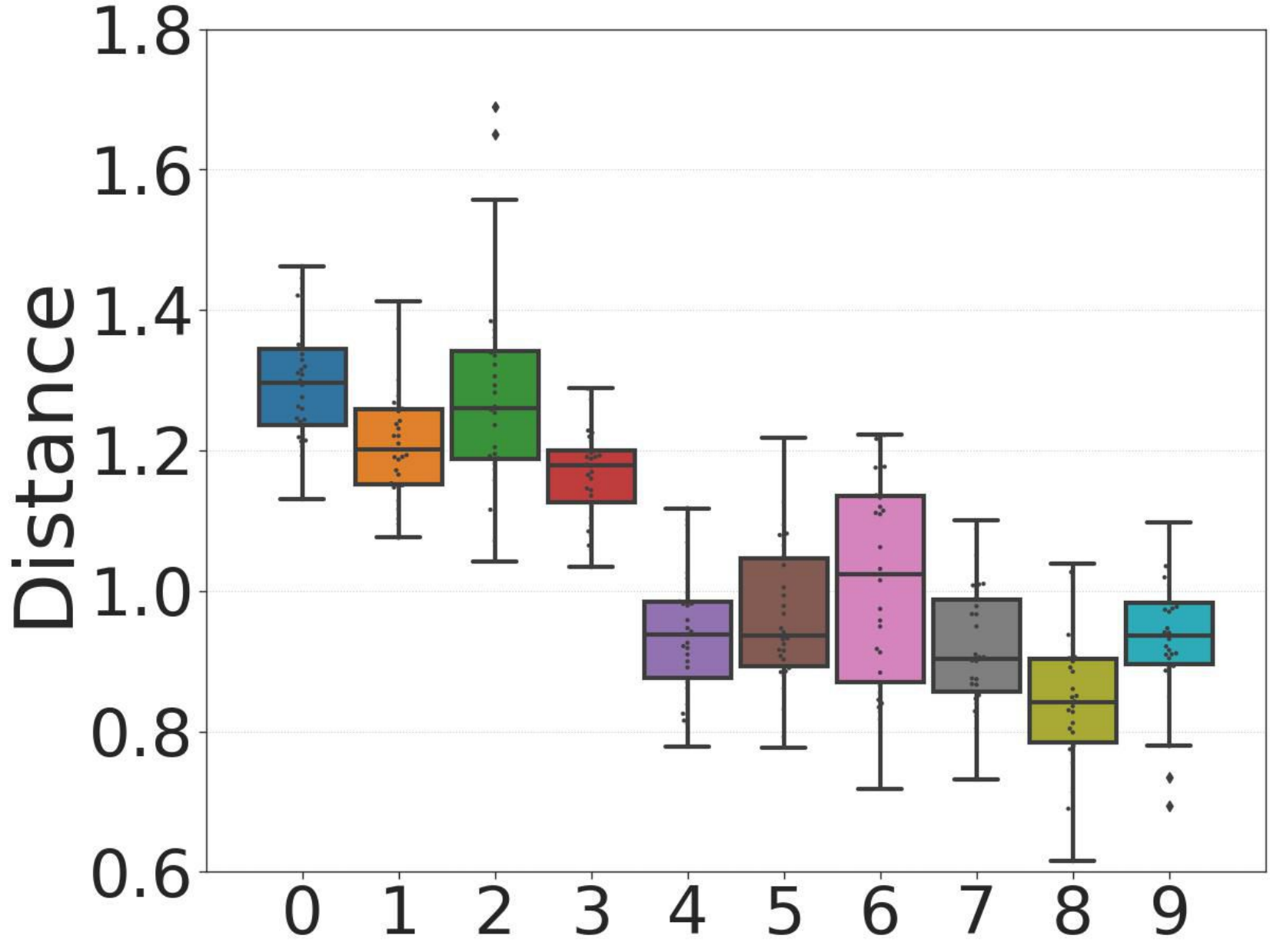} 
    \vspace{-10pt}
    \caption{w/o. \FunctionName}
    \label{fig:o1} 
  \end{subfigure} 
      \begin{subfigure}[b]{0.48\linewidth}
    \centering
    \includegraphics[width=\linewidth]{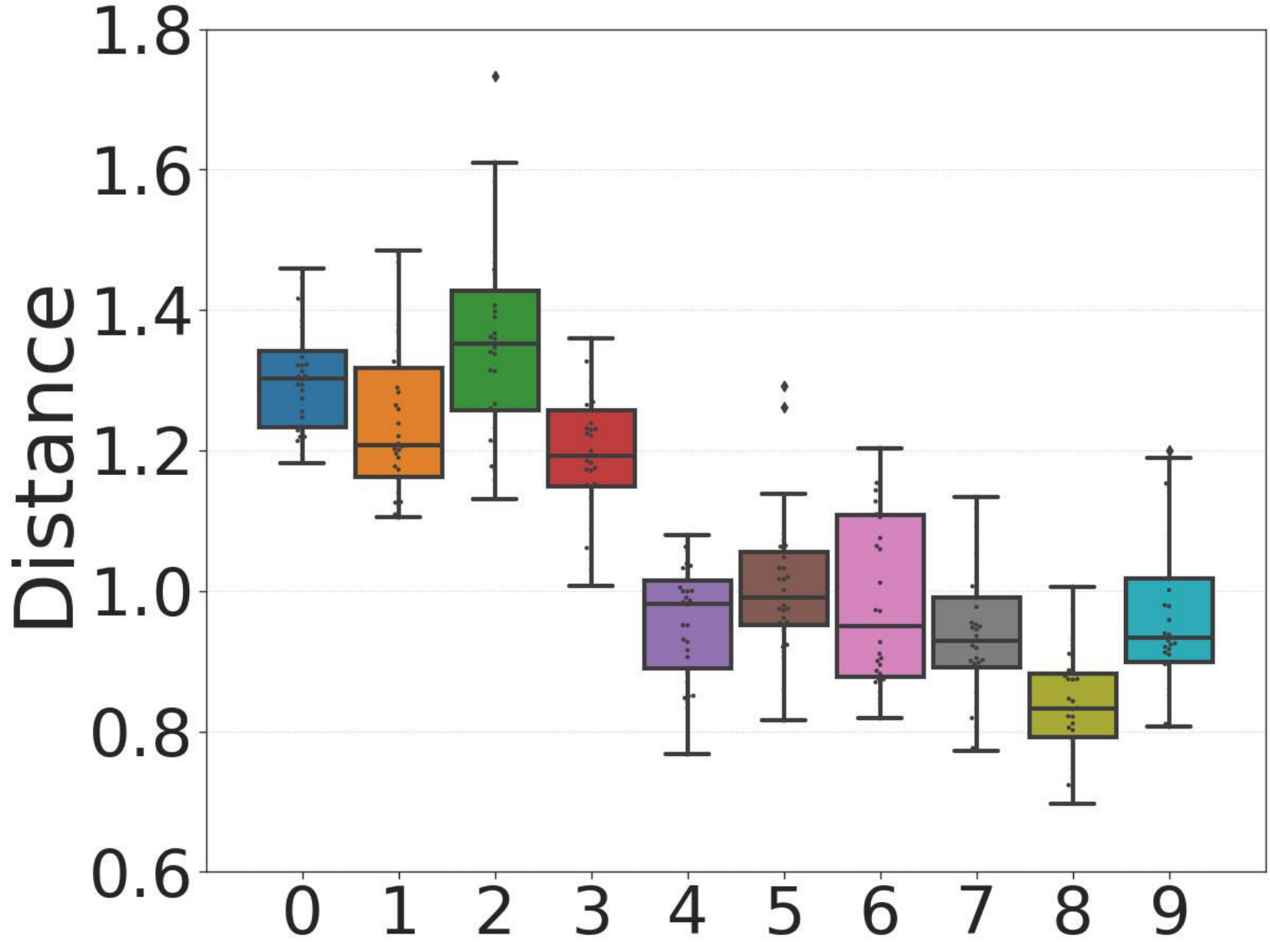}
    \vspace{-10pt}
    \caption{w/. \FunctionName}
    \label{fig:o2} 
  \end{subfigure} 
  \vspace{-5pt}
  \caption{Distributions of Euclidean distances of AEs generated from the same clean data in consecutive training epochs.} 
  \label{fig:overlap}
  \vspace{-15pt}
\end{figure}

From Figure~\ref{fig:ts}, we find the feature space of tail classes is smaller than that of head classes, making the model lean to classify the input into head classes. So, it is important to expand the feature space for tail classes to balance the feature representation. To achieve this goal, we first define a probabilistic feature embedding space as $\mathbf{f}^p = [\frac{e^{f_1}}{\sum_j e^{f_j}}, \frac{e^{f_2}}{\sum_j e^{f_j}}, \dots,  \frac{e^{f_K}}{\sum_j e^{f_j}}] = [f_{1}^p, f_{2}^p, \dots, f_{K}^p]$, where $f_i$ is the $i$-th feature before the final classification layer, and $K$ is the feature dimension. The motivation of using a probabilistic feature embedding space is to overcome the scale changes in feature representations caused by the unbalanced data distribution~\cite{wu_adversarial_2021}. For each class $i$, we assume the probabilistic feature is sampled from a distribution $\mathcal{D}_i^f$. As a result,  given any two classes $i\in TC$ and $j\in HC\cup BC$, our goal is to maximize the difference between $\mathcal{D}_i^f$ and $\mathcal{D}_j^f$, thereby rebalancing the distributions of different classes in the feature space and making them more divisible.

\begin{figure}[h]
\centering
\vspace{-15pt}
    \begin{minipage}{0.8\linewidth}
\begin{algorithm}[H]
   \caption{\FeatureName}
   \label{alg:tail}
\begin{algorithmic}[1]
\begin{small}
   \STATE {\bfseries Input:} probabilistic feature batch $\mathbf{F}^p$, label $\mathbf{y}$, class weights $\mathbf{\Omega}$, tail classes \textit{TC}, batch size $B$
   \STATE $R\gets 0, S\gets 0$
   \FOR{$i = 1 \to B$}
   \IF{$y_i \in$ \textit{TC}}
   \STATE $S=S+1$
   \STATE Update $R$ following Equation \ref{eq:update}
   \ENDIF
   \ENDFOR
   \IF{$S=0$}
   \STATE \textbf{return} 0
   \ELSE
   \STATE \textbf{return} $\frac{R}{S}$
   \ENDIF
   \end{small}
\end{algorithmic}
\end{algorithm}
    \end{minipage}
    \vspace{-5pt}
\end{figure}
We design a \underline{T}ail-sample-mining-based fe\underline{a}ture marg\underline{i}n regu\underline{l}arization (\FeatureName) approach to achieve this goal. Algorithm~\ref{alg:tail} describes its detailed mechanism. Specifically, let $\mathbf{F}^p = [\mathbf{f}_1^p, \mathbf{f}_2^p, \dots, \mathbf{f}_B^p]$ denote all probabilistic features of a batch containing  $B$ samples, and $\mathbf{y}=[y_1, y_2, \dots, y_B]$ denote the labels of the corresponding feature representations. The class weights $\mathbf{\Omega}=[\omega_1, \omega_2, \dots, \omega_C]$ are calculated based on the smoothed inverse class frequency~\cite{mahajan_exploring_2018,mikolov_distributed_2013}, i.e., $\omega_i = \sqrt{\frac{\sum_j N_j}{N_i}}$, implying tail classes have larger class weights than head classes. The core component of  \FeatureName is the computation of the regularization term $R$, which is updated for each $y_i \in TC$ using the following equation:
{\small\begin{align}
R = R -\frac{1}{B}\sum_{j=1}^B (-1)^{\mathds{1}(y_i = y_j)}(\omega_i+\omega_j)\sum_{k=1}^K f^p_{j,k}\log\frac{f^p_{j,k}}{f^p_{i,k}}
\label{eq:update}
\end{align}}%
where $\mathds{1}(y_i = y_j)$ is the indicator function (outputting 1 if $y_i = y_j$, and 0 otherwise), and the initial value of $R$ is 0. In Equation \ref{eq:update}, we first compute the feature distribution differences using the Kullback–Leibler divergence (KLD): $\sum_{k=1}^K f^p_{j,k}\log\frac{f^p_{j,k}}{f^p_{i,k}}$, where $f^p_{i,k}$ is the value of the $k$-th dimension in the probabilistic feature $\mathbf{f}^p_i$ for the $i$-th sample. 
A larger KLD value means a larger difference between the distributions of the feature embeddings of the $i$-th and $j$-th samples. Hence, with the  property of $R$, for each batch, we can maximize the distributional differences between $\mathcal{D}_i^f, i\in {TC}$ and $\mathcal{D}_j^f, j\neq i, j\in [C]$, and minimize the distributional gap for samples from the same tail class. To further enhance the influence of the regularization term among tail classes, we assign a \textit{joint weight} $(\omega_i+\omega_j)$ to the feature pair $(\mathbf{f}^p_i,\mathbf{f}^p_j)$. $\omega_i$ for tail samples is bigger than that for head samples. To increase the distinction between pairs of tail classes and non-tail classes and pairs of two tail classes, the joint weight further strengthens the effect of the regularization for pairs of two tail classes, thus improving the performance. Finally, we adopt the average distance inside the batch: $\frac{R}{S}$.

Note that our regularization term \FeatureName is general and can be used with any other long-tailed recognition loss function $L_{\mathrm{lt}}$ in the following form:
\begin{align*}
    L = L_{\mathrm{lt}} + \mathrm{\FeatureName}
\end{align*}
To summarize, the training pipeline of our \SysName uses \FunctionName to generate adversarial examples, from which it trains the robust model with the loss function $L$.

\renewcommand{\arraystretch}{1.}
\begin{table*}[t]
\centering
\begin{adjustbox}{max width=1.0\linewidth}
\begin{tabular}{c|c|c|c|c|c|c}
 \Xhline{2pt}
Losses & Method & Clean Accuracy & PGD-20 & PGD-100 & CW-100 & AA \\ \hline
\multirow{2}{*}{FL} & PGD-AT & 53.58(0.81) & 30.88(0.24) & 30.85(0.25) & 28.48(0.59) & 27.00(0.60) \\
& \SysName & \textbf{55.22}(1.42) & \textbf{31.14}(0.34) & \textbf{31.08}(0.33) & \textbf{28.71}(0.53) & \textbf{27.23}(0.56) \\ \hline
\multirow{2}{*}{EN} & PGD-AT & \textbf{55.26}(0.38) & 31.82(0.36) & 31.75(0.40) & 29.91(0.27) & 28.26(0.22) \\
& \SysName & 55.25(0.87) & \textbf{32.20}(0.25) & \textbf{32.14}(0.23) & \textbf{30.12}(0.33) & \textbf{28.69}(0.48) \\ \hline
\multirow{2}{*}{LDAM} & PGD-AT & 52.74(0.71) & 31.31(0.25) & 31.24(0.23) & 29.41(0.42) & 28.03(0.37) \\
& \SysName & \textbf{53.47}(1.04) & \textbf{31.52}(0.27) & \textbf{31.52}(0.27) & \textbf{29.63}(0.25) & \textbf{28.20}(0.21) \\ \hline
\multirow{2}{*}{BSL} & PGD-AT & 66.99(0.17) & 35.23(0.45) & 35.01(0.43) & 33.17(0.37) & 31.15(0.49) \\
& \SysName & \textbf{67.33}(0.45) & \textbf{36.20}(0.06) & \textbf{36.02}(0.09) & \textbf{33.98}(0.23) & \textbf{32.08}(0.12) \\
 \Xhline{2pt}
\end{tabular}
\end{adjustbox}
\vspace{-5pt}
\caption{Results on CIFAR-10-LT (UR=50) with different long-tailed recognition losses. For this and the following tables, standard errors are shown inside (). Best results are in bold.}
\vspace{-5pt}
\label{tab:t1}
\end{table*}

\begin{table*}[t]
\centering
\begin{adjustbox}{max width=1.0\linewidth}
\begin{tabular}{c|c|c|c|c|c|c}
 \Xhline{2pt}
\textbf{Rebalancing} & \textbf{Method} & \textbf{Clean Accuracy} & \textbf{PGD-20} & \textbf{PGD-100} & \textbf{CW-100} & \textbf{AA} \\ \hline
-- & \multirow{5}{*}{PGD-AT} & 66.99(0.17) & 35.23(0.45) & 35.01(0.43) & 33.17(0.37) & 31.15(0.49) \\
RW & & 66.82(0.40) & 35.80(0.05) & 35.65(0.08) & 33.29(0.32) & 31.40(0.31) \\
RWS & & 67.28(0.63) & 35.83(0.35) & 35.70(0.37) & 33.38(0.50) & 31.50(0.67) \\
ENR & & 66.53(0.91) & 35.26(0.14) & 35.08(0.12) & 32.95(0.27) & 31.05(0.13) \\
BRW & & \textbf{67.98}(0.09) & 34.65(0.30) & 34.47(0.32) & 33.55(0.33) & 31.36(0.40) \\  \Xhline{2pt}
\FunctionName & w/o \FeatureName & 67.46(0.65) & 35.59(0.18) & 35.48(0.19) & 33.51(0.39) & 31.68(0.33) \\
\FunctionName & w \FeatureName & 67.33(0.45) & \textbf{36.20}(0.06) & \textbf{36.02}(0.09) & \textbf{33.98}(0.23) & \textbf{32.08}(0.12) \\
 \Xhline{2pt}
\end{tabular}
\end{adjustbox}
\vspace{-5pt}
\caption{Comparisons between different AE generation re-balancing strategies. \textit{BSL loss is adopted.}}
\vspace{-15pt}
\label{tab:ab}
\end{table*}

\section{Experiments}
\label{sec:4}

\noindent\textbf{Datasets and Models}. We evaluate \SysName on CIFAR-10-LT and CIFAR-100-LT, which are the mainstream datasets for evaluating long-tailed recognition tasks~\cite{cui_class-balanced_2019,cao_learning_2019,ren_balanced_2020,wu_adversarial_2021}. Furthermore, we evaluate our method on a large dataset from the real world, Tiny-Imagenet~\cite{le2015tiny}, in the supplementary materials. To generate the unbalanced dataset, we follow the approach in~\cite{cao_learning_2019} to set the unbalanced ratio (UR) as \{10, 20, 50, 100\} for CIFAR-10-LT and \{10, 20, 50\} for CIFAR-100-LT. We choose ResNet-18 (ResNet)~\cite{he_deep_2016} and WideResNet-28-10 (WRN)~\cite{zagoruyko_wide_2016} as the target models.

\noindent\textbf{Baselines}. We consider two baselines. The first one is to \textit{simply combine existing adversarial training methods with various long-tailed recognition losses}. Our experiments and analysis in the supplementary materials show that some adversarial training methods cannot converge well with the long-tailed recognition loss, such as TRADES~\cite{zhang_theoretically_2019}, AWP~\cite{wu_adversarial_2021} and MART~\cite{wang_improving_2020}. So we choose the most effective one: PGD-AT~\cite{madry_towards_2018}. The second baseline is RoBal~\cite{wu_adversarial_2021}.

\noindent\textbf{Implementation}. In our experiments, the number of training epochs is 80. The learning rate is 0.1 at the beginning and decayed in epochs 60 and 75 with a factor of 0.1. The weight decay is 0.0005. We adopt SGD to optimize the model parameters with a batch size of 128. We save the model with the highest robustness on the test set. For adversarial training, we adopt $l_\infty$-norm PGD~\cite{madry_towards_2018}, with a maximum perturbation size $\epsilon=8/255$ for 10 iterations, and step length $\alpha=2/255$ in each iteration. For each configuration, we report the mean and standard error under three repetitive experiments with different random seeds. Training with \SysName is efficient and does not incur significant costs, as demonstrated in the supplementary materials.

\noindent\textbf{Attacks}.  We mainly consider the $l_\infty$-norm attacks to evaluate the model's robustness. The results under the $l_2$-norm attacks can be found in the supplementary materials. We choose four representative attacks: PGD attack~\cite{madry_towards_2018} with the cross-entropy loss under 20 and 100 steps (PGD-20 and PGD-100), PGD attack with the C\&W loss~\cite{carlini_towards_2017} under 100 steps (CW-100), and AutoAttack~\cite{croce_reliable_2020} (AA).

\subsection{Ablation Studies}
\label{sec:ab}

\noindent\textbf{Impact of Long-tailed Recognition Losses}. 
\SysName is general and can be combined with different long-tailed recognition losses. We select four state-of-the-art losses and add each one with \FeatureName to evaluate \SysName: focal loss (FL)~\cite{lin_focal_2017}, effective number loss (EN)~\cite{cui_class-balanced_2019}, label-distribution-aware margin loss (LDAM)~\cite{cao_learning_2019}, and balanced softmax loss (BSL)~\cite{ren_balanced_2020}. For comparisons, we also choose PDG-AT and replace the original cross-entropy loss with the above long-tailed recognition loss for model parameter optimization.

Table~\ref{tab:t1} shows the comparison results with ResNet-18 and CIFAR-10-LT (UR=50). We obtain two observations. (1) BSL loss can significantly outperform other long-tailed recognition losses for clean accuracy as well as adversarial robust accuracy against different attacks. So in the rest of our paper, \textit{we mainly adopt this choice for evaluations}. (2) \SysName achieves better adversarial robustness than PGD-AT for whatever loss function is adopted to train the model. Furthermore, the clean accuracy is improved in most cases when \SysName is used. Therefore, we conclude that \SysName has strong generalization and applicability to different recognition losses.

\renewcommand{\arraystretch}{1.}

\noindent\textbf{Impact of AE Generation Re-balancing Losses}.
We then compare the effectiveness of our \FunctionName with various rebalancing methods adopted in the AE generation process. We replace the cross-entropy with four SOTA rebalancing strategies: (1) ReWeight (RW)~\cite{huang_learning_2016,wang_learning_2017}; (2) ReWeight Smooth (RWS)~\cite{mahajan_exploring_2018,mikolov_distributed_2013}; (3) Effective Number Reweight (ENR)~\cite{cui_class-balanced_2019}; (4) Balanced Softmax ReWeight (BRW)~\cite{ren_balanced_2020}.

Table~\ref{tab:ab} shows the comparison results on CIFAR-10-LT (UR=50) with the ResNet-18 model structure. We observe that the considered strategies can indeed increase the clean accuracy and robustness of the final models by re-balancing the generated AEs. Particularly, our \FunctionName outperforms other approaches, giving better robustness under different attacks. Furthermore, our dynamic effective number achieves better results than the original effective number implementation (ENR), in which the model adopts labels of the clean data to balance the AE generation.
This is because our adaptive effective number allows the samples to equally learn features of both head and tail classes, and makes the model obtain more marginal benefit from the AEs. By combining \FunctionName and \FeatureName, \SysName achieves the best results under various attacks, \textbf{which proves the effectiveness of the AE re-balancing and feature distribution alignment strategy}.

\subsection{Evaluation under Various Settings}
\label{sec:mr}

\renewcommand{\arraystretch}{1.}
\begin{table*}[ht]
\centering
\begin{adjustbox}{max width=1.0\linewidth}
\begin{tabular}{c|c|c|c|c|c|c}
 \Xhline{2pt}
\textbf{UR} & \textbf{Method} & \textbf{Clean Accuracy} & \textbf{PGD-20} & \textbf{PGD-100} & \textbf{CW-100} & \textbf{AA} \\ \hline
\multirow{2}{*}{10} & RoBal & \textbf{75.33}(0.39) & \begin{tabular}[c]{@{}c@{}}45.98(0.39)\\ Adaptive: \textcolor{red}{41.25}\end{tabular} & \begin{tabular}[c]{@{}c@{}}45.97(0.39)\\ Adaptive: \textcolor{red}{41.13}\end{tabular} & 41.02(0.02) & \textbf{39.30}(0.10) \\ \cline{2-7} 
 & \SysName & 75.20(0.03) & \textbf{42.97}(0.17) & \textbf{42.76}(0.19) & \textbf{41.52}(0.22) & 39.25(0.21) \\ \hline
\multirow{2}{*}{20} & RoBal & 71.92(0.62) & \begin{tabular}[c]{@{}c@{}}43.23(0.25)\\ Adaptive: \textcolor{red}{38.45}\end{tabular} & \begin{tabular}[c]{@{}c@{}}43.19(0.22)\\ Adaptive: \textcolor{red}{38.20}\end{tabular} & 38.23(0.07) & 36.17(0.28) \\ \cline{2-7} 
 & \SysName & \textbf{72.73}(0.50) & \textbf{40.57}(0.15) & \textbf{40.41}(0.12) & \textbf{38.55}(0.29) & \textbf{36.53}(0.21) \\ \hline
\multirow{2}{*}{50} & RoBal & 66.08(0.69) & \begin{tabular}[c]{@{}c@{}}38.46(0.18)\\ Adaptive: \textcolor{red}{33.54}\end{tabular} & \begin{tabular}[c]{@{}c@{}}38.44(0.11)\\ Adaptive: \textcolor{red}{33.20}\end{tabular} & 33.90(1.72) & 31.14(0.44) \\ \cline{2-7} 
 & \SysName & \textbf{67.33}(0.45) & \textbf{36.20}(0.06) & \textbf{36.02}(0.09) & \textbf{33.98}(0.23) & \textbf{32.08}(0.12) \\ \hline
\multirow{2}{*}{100} & RoBal & 60.11(0.62) & \begin{tabular}[c]{@{}c@{}}36.08(0.18)\\ Adaptive: \textcolor{red}{30.55}\end{tabular} & \begin{tabular}[c]{@{}c@{}}36.05(0.20)\\ Adaptive: \textcolor{red}{30.36}\end{tabular} & 30.57(0.77) & 28.64(0.28) \\ \cline{2-7} 
 & \SysName & \textbf{63.92}(0.68) & \textbf{32.84}(0.07) & \textbf{32.69}(0.15) & \textbf{30.73}(0.38) & \textbf{28.90}(0.33) \\
 \Xhline{2pt}
\end{tabular}
\end{adjustbox}
\vspace{-5pt}
\caption{Results on CIFAR-10-LT with different values of UR. \textcolor{red}{Red} numbers represent the results under our adaptive attack.}
\vspace{-10pt}
\label{tab:cifar-10-robal}
\end{table*}

\begin{table*}[ht]
\centering
\begin{adjustbox}{max width=1.0\linewidth}
\begin{tabular}{c|c|c|c|c|c|c}
 \Xhline{2pt}
\textbf{UR} & \textbf{Method} & \textbf{Clean Accuracy} & \textbf{PGD-20} & \textbf{PGD-100} & \textbf{CW-100} & \textbf{AA} \\ \hline
\multirow{2}{*}{10} & RoBal & 43.47(0.31) & \begin{tabular}[c]{@{}c@{}}20.55(0.20)\\ Adaptive: \textcolor{red}{18.49}\end{tabular} & \begin{tabular}[c]{@{}c@{}}20.49(0.20)\\ Adaptive: \textcolor{red}{18.31}\end{tabular} & \textbf{18.12}(0.23) & \textbf{16.86}(0.10) \\ \cline{2-7} 
 & \SysName & \textbf{45.94}(0.15) & \textbf{19.26}(0.18) & \textbf{19.16}(0.18) & 17.99(0.09) & 16.58(0.06) \\ \hline
\multirow{2}{*}{20} & RoBal & 39.58(0.40) & \begin{tabular}[c]{@{}c@{}}17.73(0.11)\\ Adaptive: \textcolor{red}{16.00}\end{tabular} & \begin{tabular}[c]{@{}c@{}}17.71(0.08)\\ Adaptive: \textcolor{red}{15.93}\end{tabular} & 15.58(0.11) & \textbf{14.55}(0.10) \\ \cline{2-7} 
 & \SysName & \textbf{41.98}(0.21) & \textbf{16.84}(0.10) & \textbf{16.72}(0.12) & \textbf{15.77}(0.23) & 14.45(0.20) \\ \hline
\multirow{2}{*}{50} & RoBal & 34.24(0.54) & \begin{tabular}[c]{@{}c@{}}14.77(0.09)\\ Adaptive: \textcolor{red}{13.22}\end{tabular} & \begin{tabular}[c]{@{}c@{}}14.77(0.10)\\ Adaptive: \textcolor{red}{13.18}\end{tabular} & 12.77(0.10) & 12.02(0.08) \\ \cline{2-7} 
 & \SysName & \textbf{37.43}(0.37) & \textbf{14.25}(0.22) & \textbf{14.18}(0.26) & \textbf{13.38}(0.15) & \textbf{12.32}(0.17) \\
 \Xhline{2pt}
\end{tabular}
\end{adjustbox}
\caption{Results on CIFAR-100-LT with different URs. \textcolor{red}{Red} numbers represent the results under our adaptive attack.}
\vspace{-10pt}
\label{tab:cifar-100-robal}
\end{table*}

\begin{table*}[hbt]
\centering
\begin{adjustbox}{max width=1.0\linewidth}
\begin{tabular}{cc|c|c|c|c|c}
 \Xhline{2pt}
\multicolumn{2}{c|}{Method} & Clean Accuracy & PGD-20 & PGD-100 & CW-100 & AA \\ \hline
\multicolumn{1}{c|}{\multirow{2}{*}{ResNet}} & RoBal & 66.08(0.69) & \begin{tabular}[c]{@{}c@{}}38.46(0.18)\\ Adaptive: \textcolor{red}{33.54}\end{tabular} & \begin{tabular}[c]{@{}c@{}}38.44(0.11)\\ Adaptive: \textcolor{red}{33.20}\end{tabular} & 33.90(1.72) & 31.14(0.44) \\ \cline{2-7} 
\multicolumn{1}{c|}{} & \SysName & \textbf{67.33}(0.45) & \textbf{36.20}(0.06) & \textbf{36.02}(0.09) & \textbf{33.98}(0.23) & \textbf{32.08}(0.12) \\ \hline
\multicolumn{1}{c|}{\multirow{2}{*}{WRN}} & RoBal & 69.33(0.11) & \begin{tabular}[c]{@{}c@{}}39.97(0.30)\\ Adaptive: \textcolor{red}{34.58}\end{tabular} & \begin{tabular}[c]{@{}c@{}}39.98(0.32)\\ Adaptive: \textcolor{red}{34.26}\end{tabular} & 34.83(0.21) & 33.09(0.41) \\ \cline{2-7} 
\multicolumn{1}{c|}{} & \SysName & \textbf{72.58}(0.31) & \textbf{36.53}(0.31) & \textbf{36.35}(0.32) & \textbf{35.30}(0.37) & \textbf{33.37}(0.37) \\ 
 \Xhline{2pt}
\end{tabular}
\end{adjustbox}
\caption{Results on CIFAR-10-LT (UR=50) with different model structures. \textcolor{red}{Red} numbers represent the results under our adaptive attack.}
\vspace{-10pt}
\label{tab:cifar-10-robal-ar}
\end{table*}

\begin{figure}[ht]
\begin{minipage}[t]{0.5\textwidth}
\centering
    \begin{subfigure}[b]{0.45\linewidth}
    \centering
    \includegraphics[width=\linewidth]{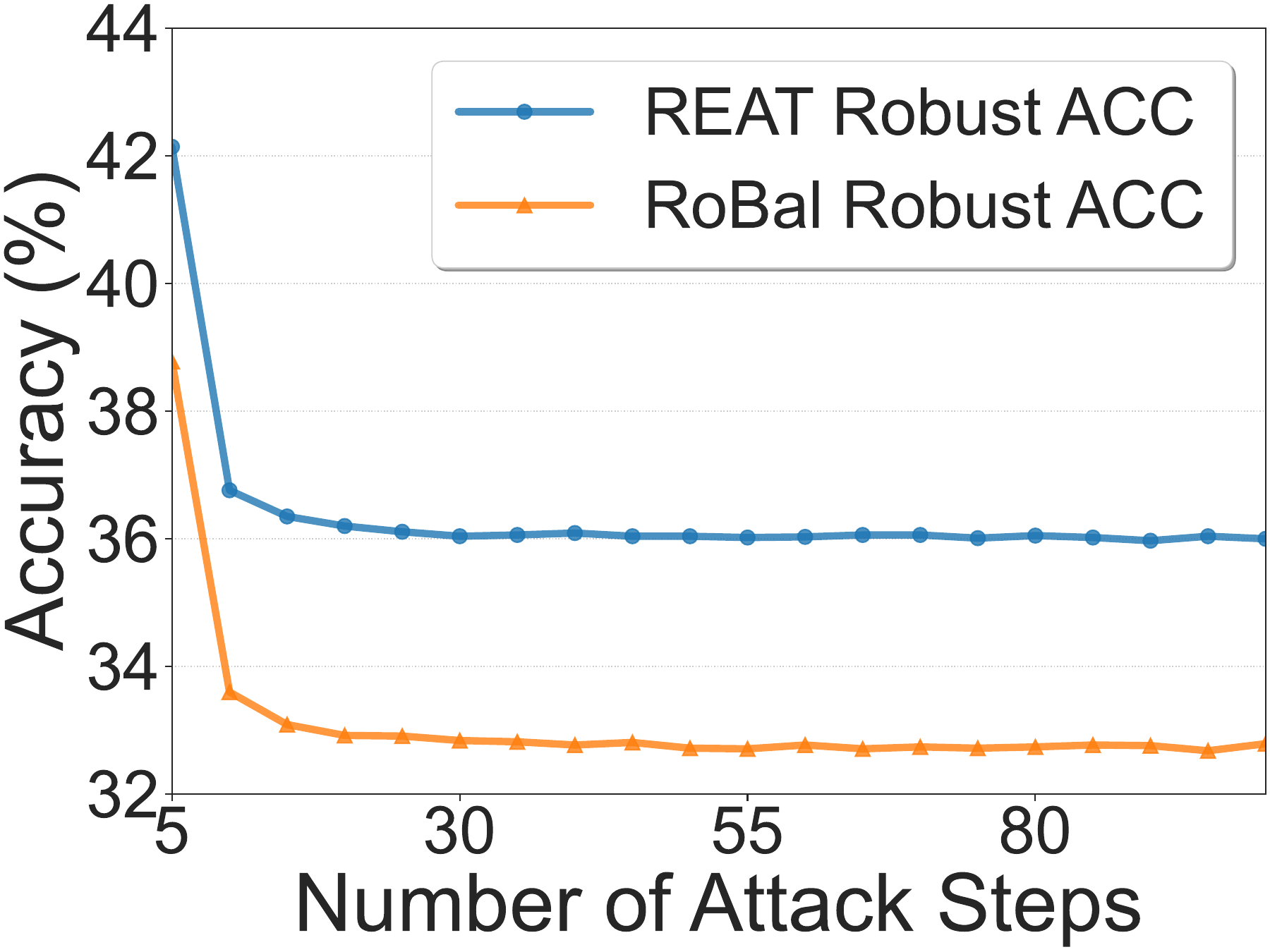}
    \vspace{-10pt}
    \caption{Acc. of attack steps.}
    \label{fig:budget} 
  \end{subfigure} 
      \begin{subfigure}[b]{0.45\linewidth}
    \centering
    \includegraphics[width=\linewidth]{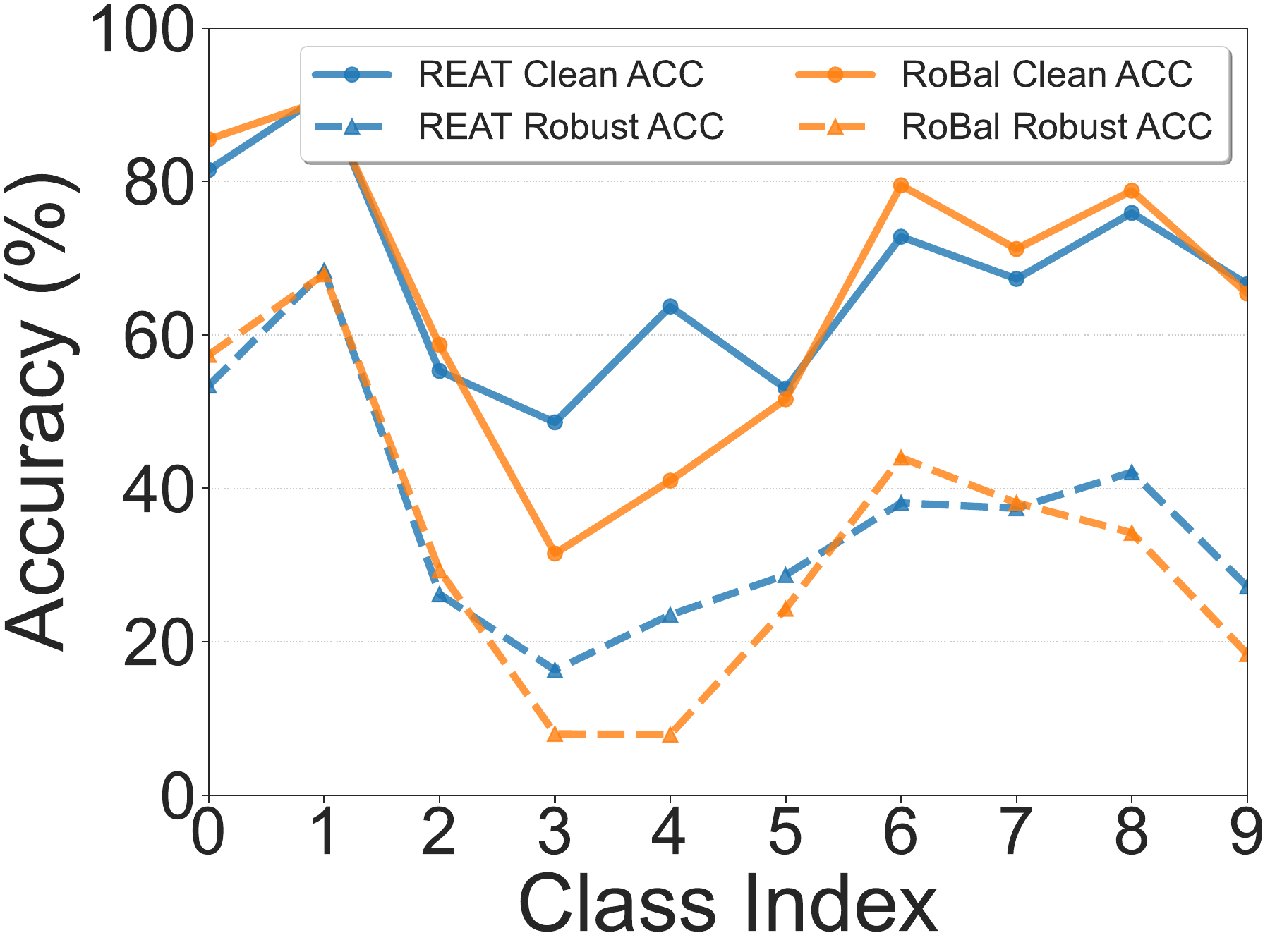}
    \vspace{-10pt}
    \caption{Acc. of each class.}
    \label{fig:class} 
  \end{subfigure} 
  \vspace{-5pt}
  \caption{More comparisons results between \SysName and RoBal for clean accuracy and adversarial robustness.} 
  \label{fig:acc} 
  \end{minipage}
\hspace{15pt}
\begin{minipage}[t]{0.5\textwidth}
\centering
    \begin{subfigure}[b]{0.45\linewidth}
    \centering
    \includegraphics[width=\linewidth]{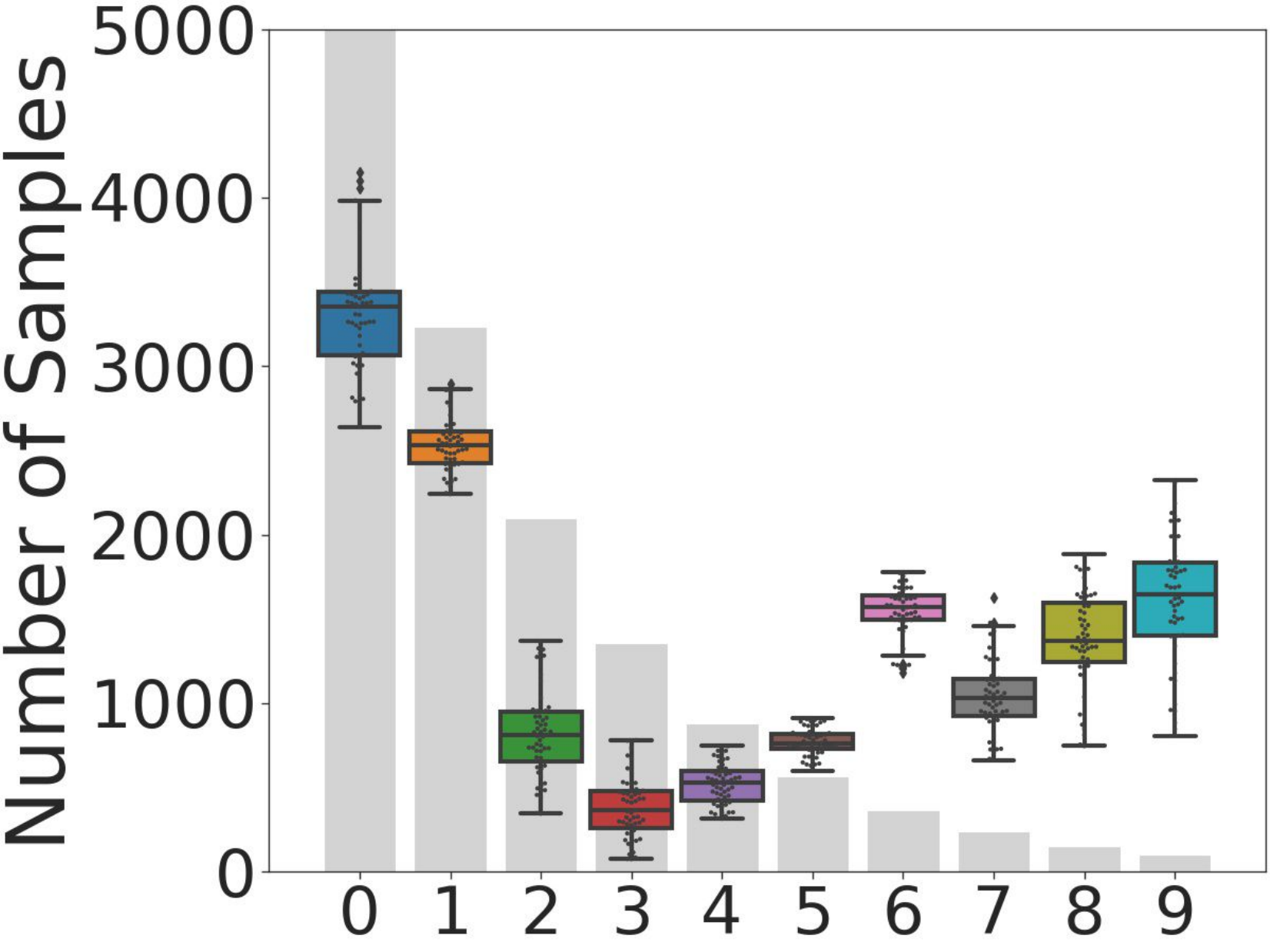} 
    \vspace{-10pt}
    \caption{RoBal}
    \label{fig:b1} 
  \end{subfigure} 
      \begin{subfigure}[b]{0.45\linewidth}
    \centering
    \includegraphics[width=\linewidth]{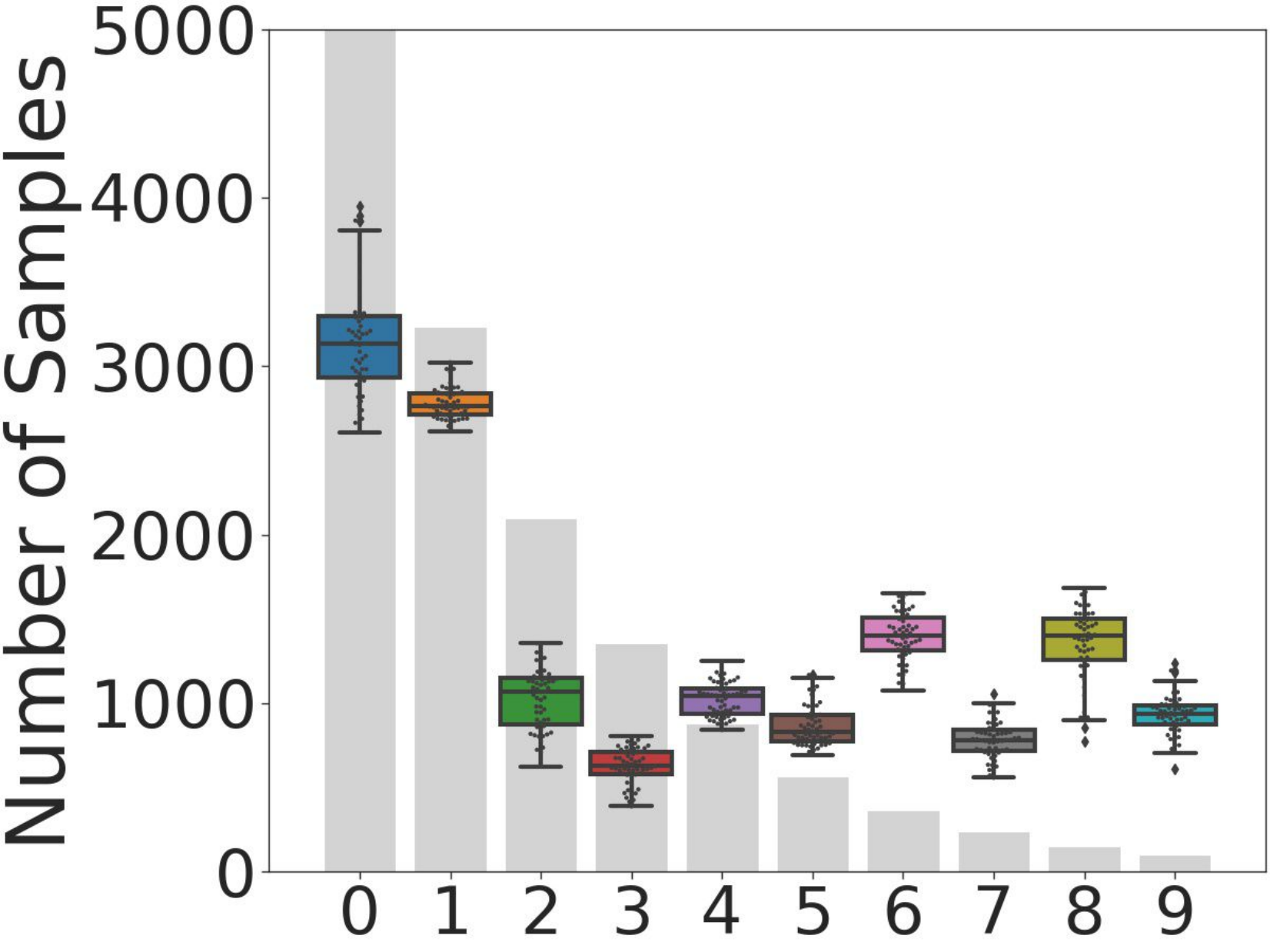}
    \vspace{-10pt}
    \caption{\SysName}
    \label{fig:b2} 
  \end{subfigure} 
  \vspace{-10pt}
  \caption{Distributions of model predictions for AEs during training. Clean label distributions are shown by gray bars.} 
  \label{fig:box} 
  \end{minipage}
\vspace{-10pt}
\end{figure}

\begin{figure}[hbt] 
    \begin{subfigure}[b]{0.49\linewidth}
    \centering
    \includegraphics[width=\linewidth]{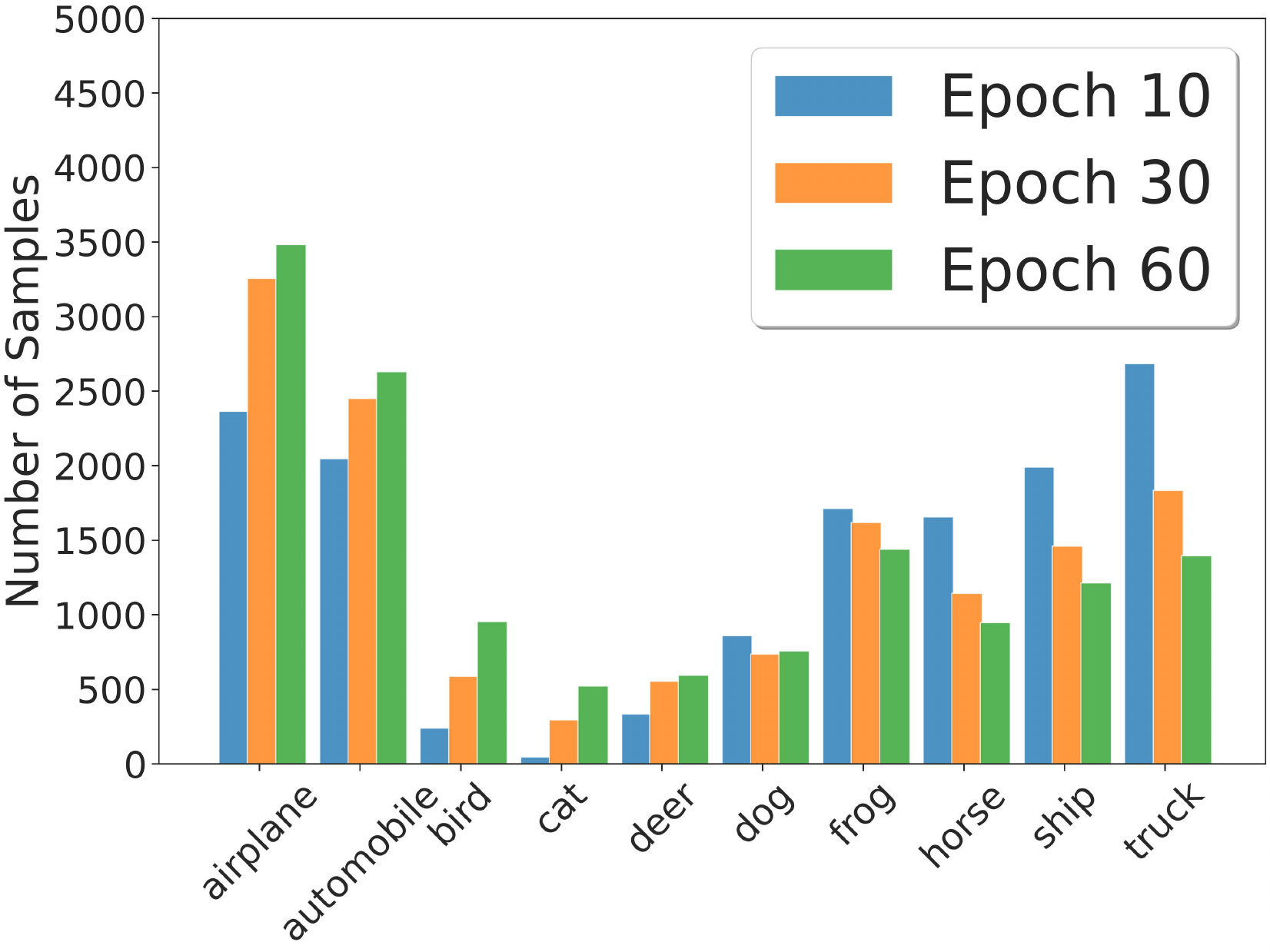} 
    \vspace{-15pt}
    \caption{RoBal}
    \label{fig:e1} 
  \end{subfigure} 
      \begin{subfigure}[b]{0.49\linewidth}
    \centering
    \includegraphics[width=\linewidth]{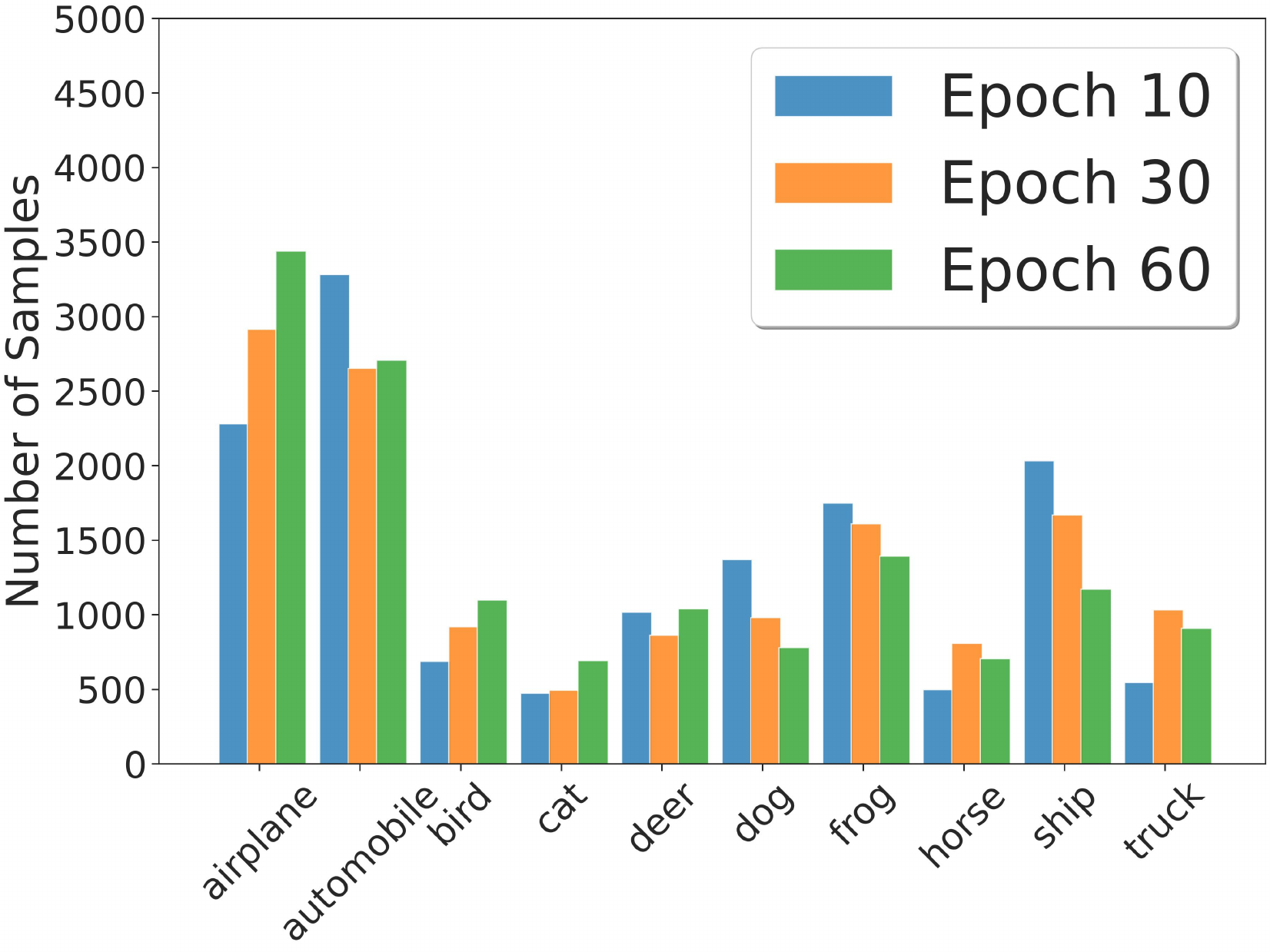}
    \vspace{-15pt}
    \caption{\SysName}
    \label{fig:e2} 
  \end{subfigure} 
  \vspace{-5pt}
  \caption{The distribution of model predictions for AEs in Epoch \{10, 30, 60\} on CIFAR-10-LT (UR=50).}
  \label{fig:a5} 
  \vspace{-15pt}
\end{figure}

To the best of our knowledge, RoBal~\cite{wu_adversarial_2021} is the only work specifically focusing on adversarial training on unbalanced datasets. As analyzed in Section~\ref{sec:ltat}, there are several limitations in RoBal. Besides, we find that the scale-invariant classification layer in RoBal can \textbf{cause gradient vanishing} when generating AEs with the cross-entropy loss. It is because the normalized weights of the classification layer and the normalized features greatly reduce the scale of the gradients, making it fail to generate powerful AEs. \textit{We propose a simple adaptive attack to break the gradient vanishing and invalidate RoBal}: the adversary can multiply the output logits by a factor (10 in all cases) when generating AEs, and then use these AEs to attack RoBal. This can significantly decrease the adversarial robustness of the trained model.

We perform experiments to compare RoBal and \SysName from different perspectives, as shown in Tables~\ref{tab:cifar-10-robal} and~\ref{tab:cifar-100-robal}. We adopt the CIFAR-10-LT and CIFAR-100-LT with ResNet-18 settings, respectively. In Table~\ref{tab:cifar-10-robal-ar}, we compare the results under different model structures on CIFAR-10. More results with different configurations can be found in the supplementary materials. First, for PGD-based attacks, we show that the model robustness partially originates from the gradient vanishing, and our adaptive attack can successfully break this effort. CW attack and AA can break the gradient obfuscation in the classification layer to some degree, due to the different loss functions in the AE generation process. Therefore, we do not design adaptive attacks for them. Second, comparing the results of RoBal and \SysName under different values of UR, \textbf{\SysName can achieve higher clean accuracy and robustness}, especially with larger UR. Third, for different model architectures, we prove that \SysName outperforms RoBal consistently. This indicates \SysName is a better training strategy for highly unbalanced datasets.

Furthermore, Figure~\ref{fig:budget} illustrates the robust accuracy of ResNet-18 on CIFAR-10-LT (UR=50) under different numbers of PGD attack steps. It proves that our \SysName outperforms RoBal under all attack budgets. Figure~\ref{fig:class} plots the accuracy of ResNet-18 on CIFAR-10-LT (UR=50) under the PGD-20 attack for each class. Our \SysName can \textbf{achieve higher clean accuracy and robust accuracy on ``body'' classes}, which will be explained below. More results can be found in the supplementary materials.

\noindent\textbf{Interpretation}. 
We perform an in-depth analysis of the comparisons between RoBal and \SysName. We show the distributions of the predicted labels of AEs during adversarial training for these two approaches in Figure~\ref{fig:box} and~\ref{fig:a5}. We choose the configurations of CIFAR-10-LT (UR=50) and ResNet-18. Results under other configurations can be found in the supplementary materials. For RoBal, we observe that there are fewer AEs classified into body classes and more AEs classified into tail classes, indicating that \textbf{RoBal makes the model pay more attention to head and tail classes while overlooking the body classes}. In contrast, \textbf{\SysName treats the body and tail classes more equally}, and this is one reason to achieve better performance on the ``body'' classes. Furthermore, we plot the feature embedding space with the t-SNE tool in the supplementary materials for feature-level comparison.

\noindent\textbf{Summary}. First, from the results, our solution brings more benefit than RoBal, as shown in Table~\ref{tab:cifar-10-robal}. RoBal can sometimes hurt the robustness and clean accuracy. Second, we prove the robustness from RoBal partially depends on the gradient obfuscation and can be defeated by an adaptive attack. Third, through the results of PGD-AT with BSL loss, RoBal, and \SysName, we find that compared with improving the robustness, it is easier to enhance the clean accuracy, which is consistent with the conclusion from~\cite{wu_adversarial_2021}. To explore the reason behind this phenomenon, we analyze the difficulty and challenges of adversarial training on long-tailed datasets in the supplementary materials. All in all, improving robustness requires more data, and the number of data in the tail classes is not enough to train a model with high robustness, which is a big challenge in adversarial training. How to further improve it is our future work.

\vspace{-5pt}
\section{Conclusion}
\vspace{-5pt}

In this paper, we propose \SysName, a new long-tailed adversarial training framework to improve the training performance on unbalanced datasets. Our method is inspired by our theoretical analysis of the lower bound of robust risk when training on a long-tailed dataset. Our analysis shows that the label distribution of adversarial examples and the feature representations can be key points in reducing robust risk. Specifically, we present two novel components, \FunctionName for promoting the model to generate balanced AEs, and a regularization term \FeatureName for forcing the model to assign larger feature spaces for tail classes. With these techniques, \SysName helps models achieve state-of-the-art results and outperforms existing solutions on different datasets and model structures. There still exists a robustness gap between the ideal result obtained in the balanced setting and our approach. In the future, we aim to keep reducing this gap with more advanced solutions, e.g., new robust network structures or training loss functions.

{
    \small
    \bibliographystyle{ieeenat_fullname}
    \bibliography{bib}
}

\appendix

\section{Related Works}
\label{ap:rw}

\subsection{Long-tailed Recognition}

Long-tailed learning means training a machine learning model on a dataset that follows a long-tailed distribution. It has been applied to various scenarios including classification tasks~\cite{ren_balanced_2020}, object detection tasks~\cite{lin_focal_2017} and segmentation tasks~\cite{wang_devil_2020}. To alleviate the uneven distribution of data in the dataset, i.e., the majority of the data belong to the head classes, while the data belonging to the tail classes are insufficient, many methods have been proposed, which can be roughly divided into four categories:re-sampling, cost-sensitive learning, training phase decoupling and classifier designing.

The re-sampling methods can be divided into four classes, i.e., random under-sampling head classes~\cite{liu_exploratory_2009}, random over-sampling tail classes~\cite{han_borderline-smote_2005}, class-balanced re-sampling~\cite{ren_balanced_2020} and scheme-oriented sampling~\cite{huang_learning_2016}. These methods solve the unbalance problem by using sampling strategies to generate desired balanced distributions.

The cost-sensitive learning methods have two types of applications, i.e., class-level re-weighting~\cite{hong_disentangling_2021, cui_class-balanced_2019, lin_focal_2017} and class-level re-margining~\cite{cao_learning_2019, khan_striking_2019, wu_adversarial_2021}. It assigns different weights to each class or adjust the minimal margin between the features and the classifier to balance the learning difficulties, achieving better performance under unbalanced data distributions.

The training phase decoupling is used to improve both the feature extractor and classifier. \cite{kang_decoupling_2020} find that training the feature extractor with instance-balanced re-sampling strategy and re-adjusting the classifier can significantly improve the accuracy in long-tailed recognition. \cite{kang_exploring_2021} further observe that a balanced feature space benefits the long-tailed recognition.

The classifier designing aims to address the biases that the weight norms for head classes are larger than them of tail classes~\cite{yin_feature_2019} in the traditional layers under long-tailed datasets. \cite{kang_decoupling_2020} propose a normalized classification layer to re-balance the weight norms for all classes. \cite{wu_adversarial_2021} also adopt a normalized classifier to defend against adversarial attacks. \cite{wu_solving_2020} propose a hierarchical classifier mapping the images into a class taxonomic tree structure. \cite{tang_long-tailed_2020} propose a classifier with causal inference to better stabilize the gradients. Note that modifying the classifier can indeed improve the performance of models on unbalanced data. However, we argue that it may introduce gradient obfuscation resulting in adaptive adversarial attacks. For more details, please refer to Section 5 in the main paper.

\subsection{Adversarial Training}

Adversarial training~\cite{madry_towards_2018, zhang_theoretically_2019, wang_improving_2020, rice_overfitting_2020} is widely studied to defend against adversarial attacks. Its basic idea is to generate  on-the-fly AEs to augment the training set. It can be formulated as the following min-max problem~\cite{madry_towards_2018}:
\begin{align*}
    \min_\theta \max_{x^*}\ell(x^*, y;\theta)
\end{align*}
where $x^*$ is the training sample generated from a clean one $x$ to maximize the loss function $\ell(\cdot)$, $y$ is the ground-truth label, $\theta$ is the model parameters. The first phase (maximization optimization) is to generate samples maximizing the loss function. The second stage (minimization optimization) is to optimize the model parameter $\theta$ to minimize the loss function under samples generated in phase one.

In previous works, there are three main research topics in adversarial training, i.e., improving the model robustness~\cite{madry_towards_2018, wang_improving_2020}, reducing the gap between clean accuracy and robustness~\cite{zhang_theoretically_2019, wu_adversarial_2020} and addressing overfitting challenges~\cite{rice_overfitting_2020, huang_self-adaptive_2020}. 

In this paper, we focus on adversarial training on datasets with long-tailed distributions. To our best knowledge, \cite{wu_adversarial_2021}  present the first work dedicated to improving the accuracy as well as robustness to tail class  during adversarial training. They design  a new loss function and cosine classifier to achieve this. However, we experimentally demonstrate the unsatisfactory security and performance of this work in Section 5.2, which motivates us to design more secure and satisfactory adversarial training methods tailored to datasets that obey long-tailed distributions.

\section{Proof}
\begin{theorem}
Given a function $f_\theta$ and dataset $\mathcal{D}$, which we defined above, let 
\begin{align*}
    g(x) \in \mathrm{argmax}_{x' \in \mathcal{B}_p (x,\epsilon)}\mathbbm{1}\{F_\theta(x) \neq F_\theta(x')\}
\end{align*}
Assume that a larger $n_i$ will decrease the $\mathcal{R}^i_{\mathrm{rob}}(\theta)$, and vice versa. Then, we have
\begin{align*}
    \mathcal{R}_{\mathrm{rob}}(\theta) \ge \sum_{i=1}^C \frac{1}{C} ( \sum_{j=1}^{n_i}\mathbbm{1}\{F_\theta (x_i^j) \neq i\} \\
    + \sum_{j=1}^{n_i}\mathbbm{1}\{F_\theta (g(x_i^j)) \neq F_\theta(x_i^j)\} )
\end{align*}
\end{theorem}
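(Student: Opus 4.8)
The plan is to establish the bound one class at a time and then aggregate, following the natural/boundary split that is already set up just before the statement. First I would use linearity to write $\mathcal{R}_{\mathrm{rob}}(\theta)=\sum_{i=1}^C \mathcal{R}^i_{\mathrm{rob}}(\theta)$ and invoke the given decomposition $\mathcal{R}^i_{\mathrm{rob}}(\theta)=\mathcal{R}^i_{\mathrm{nat}}(\theta)+\mathcal{R}^i_{\mathrm{bdy}}(\theta)$, so that it suffices to reshape each class term into the two summands appearing on the right-hand side. The natural part $\mathcal{R}^i_{\mathrm{nat}}(\theta)=\sum_{j}\mathbbm{1}\{F_\theta(x_i^j)\neq i\}\,p(i)$ is already in the target form up to the weight $p(i)$, so the real work concentrates on the boundary part and on the weights.

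Second, I would rewrite the boundary term through $g$. Since $\mathbbm{1}\{\exists\, x'\in\mathcal{B}_p(x_i^j,\epsilon):F_\theta(x')\neq i\}=\max_{x'\in\mathcal{B}_p(x_i^j,\epsilon)}\mathbbm{1}\{F_\theta(x')\neq i\}$, and $g(x)$ is by definition the perturbation maximizing $\mathbbm{1}\{F_\theta(x)\neq F_\theta(x')\}$, the key per-sample observation is $\max_{x'}\mathbbm{1}\{F_\theta(x')\neq i\}\ge \mathbbm{1}\{F_\theta(g(x_i^j))\neq F_\theta(x_i^j)\}$. I would verify this by a case split: when $x_i^j$ is clean-correct the two sides coincide, because any perturbation that moves the prediction away from the label $i$ also moves it away from the clean prediction $F_\theta(x_i^j)=i$; and when $x_i^j$ is clean-wrong the left side is already $1$. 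This replaces the reference label $i$ by the model's own clean prediction and produces exactly the second summand of the statement, so each $\mathcal{R}^i_{\mathrm{rob}}(\theta)$ is bounded below by $p(i)$ times the bracketed quantity $\sum_j \mathbbm{1}\{F_\theta(x_i^j)\neq i\}+\sum_j \mathbbm{1}\{F_\theta(g(x_i^j))\neq F_\theta(x_i^j)\}$.

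Third, and this is the crux, I would pass from the data-dependent weight $p(i)=n_i/\sum_k n_k$ to the uniform weight $1/C$. This is precisely where the monotonicity hypothesis is indispensable: ``larger $n_i$ decreases $\mathcal{R}^i_{\mathrm{rob}}(\theta)$'' says that the per-class risks and the counts $n_i$ are oppositely ordered, so head classes carry large weight but small risk while tail classes carry the reverse. The plan is to feed this opposite ordering into a rearrangement / Chebyshev-sum comparison between the $p$-weighted aggregate $\sum_i p(i)(\cdot)$ and the balanced aggregate $\sum_i \frac{1}{C}(\cdot)$, and thereby relate the former to the latter. Making this replacement airtight is the main obstacle: the monotonicity assumption is only qualitative, so I would need to state precisely the joint ordering of $\{p(i)\}$ and the bracketed per-class errors that it licenses, and then check carefully that the rearrangement inequality fires with the correct sign before summing the per-class estimates over $i$ to collapse them into the claimed bound.
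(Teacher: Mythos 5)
Your proposal retraces the paper's own proof almost exactly. The paper likewise splits each class term into the natural and boundary parts, lower-bounds the boundary part by $\sum_{j=1}^{n_i}\mathbbm{1}\{F_\theta(g(x_i^j))\neq F_\theta(x_i^j)\}\,p(i)$ using the same clean-correct/clean-wrong case split (there written as the two disjoint events $\{F_\theta(g(x_i^j))\neq F_\theta(x_i^j),\,F_\theta(x_i^j)=i\}$ and $\{F_\theta(g(x_i^j))\neq F_\theta(x_i^j),\,F_\theta(x_i^j)\neq i\}$), and so reaches $\mathcal{R}_{\mathrm{rob}}(\theta)\ge\sum_{i=1}^C p(i)\,T_i$, where $T_i=\sum_{j=1}^{n_i}\bigl(\mathbbm{1}\{F_\theta(x_i^j)\neq i\}+\mathbbm{1}\{F_\theta(g(x_i^j))\neq F_\theta(x_i^j)\}\bigr)$. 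Your first two steps are therefore correct and identical in substance to the paper's.

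The third step is where your proposal stops short, and your worry about the sign is justified: the step fails as planned. With classes sorted so that $n_1\ge\dots\ge n_C$, the assumption (larger $n_i$ decreases the risk) makes $\{n_i\}$ and $\{T_i\}$ oppositely ordered, and Chebyshev's sum inequality for oppositely ordered sequences gives $C\sum_i n_i T_i\le\bigl(\sum_i n_i\bigr)\bigl(\sum_i T_i\bigr)$, i.e.\ $\sum_i p(i)T_i\le\frac{1}{C}\sum_i T_i$. That is an \emph{upper} bound on the quantity you control: knowing $\mathcal{R}_{\mathrm{rob}}(\theta)\ge\sum_i p(i)T_i$ and that $\frac{1}{C}\sum_i T_i$ sits \emph{above} $\sum_i p(i)T_i$ lets you conclude nothing about $\mathcal{R}_{\mathrm{rob}}(\theta)$ versus $\frac{1}{C}\sum_i T_i$. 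For the exchange of weights to go the right way one needs $\{p(i)\}$ and $\{T_i\}$ to be \emph{similarly} ordered --- head classes would have to accumulate larger raw error counts $T_i$ --- which is the opposite of what the stated anti-monotonicity provides (it could be defended if the assumption were recast as a statement about per-sample error rates $T_i/n_i$, with the raw counts still growing in $n_i$, but that reinterpretation has to be made explicit). Be aware that this is not a defect of your write-up relative to the paper: the paper's proof performs exactly this replacement in a single line, ``based on the assumption and Chebyshev inequality,'' without checking the direction. You have effectively reproduced the published argument and correctly isolated its unproven step; what is missing, in both, is a justification that the weighted-to-uniform substitution holds under the stated hypothesis.
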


\begin{proof}

We have 
\begin{align*}
    \mathcal{R}^i_{\mathrm{bdy}}(\theta) = & \sum_{j=1}^{n_i}\mathbbm{1}\{\exists{x' \in \mathcal{B}_p(x_i^j,\epsilon)},F_\theta (x') \neq i\}p(i) \\
    \ge & \sum_{j=1}^{n_i}(\mathbbm{1}\{F_\theta (g(x_i^j)) \neq F_\theta(x_i^j), F_\theta(x_i^j)=i \} \\
    &+ \mathbbm{1}\{F_\theta (g(x_i^j)) \neq F_\theta(x_i^j), F_\theta(x_i^j) \neq i\} )p(i)\\
    =& \sum_{j=1}^{n_i}(\mathbbm{1}\{F_\theta (g(x_i^j)) \neq F_\theta(x_i^j)\}\mathbbm{1}\{F_\theta(x_i^j)=i \} \\
    &+ \mathbbm{1}\{F_\theta (g(x_i^j)) \neq F_\theta(x_i^j)\}\mathbbm{1}\{F_\theta(x_i^j) \neq i\} )p(i)
\end{align*}

Let $\mathcal{A} = \mathbbm{1}\{F_\theta(x_i^j) \neq i\}$ and $\mathcal{B} = \mathbbm{1}\{F_\theta (g(x_i^j)) \neq F_\theta(x_i^j)\}$, we have
\begin{align*}
\mathcal{R}_{\mathrm{rob}}(\theta) \ge&  \sum_{i=1}^C  p(i) \left( \sum_{j=1}^{n_i}\mathcal{A} + \mathcal{B}(1-\mathcal{A}) + \mathcal{B}\mathcal{A} \right)\\
=&\sum_{i=1}^C  p(i) \left( \sum_{j=1}^{n_i}\mathcal{A} + \mathcal{B} \right)
\end{align*}

Based on the assumption and Chebyshev inequality, we have
\begin{align*}
    \mathcal{R}_{\mathrm{rob}}(\theta) \ge& \sum_{i=1}^C  \frac{1}{C} \left( \sum_{j=1}^{n_i}\mathcal{A} + \mathcal{B} \right) \\
    =& \sum_{i=1}^C \frac{1}{C} ( \sum_{j=1}^{n_i}\mathbbm{1}\{F_\theta (x_i^j) \neq i\} \\+ &\sum_{j=1}^{n_i}\mathbbm{1}\{F_\theta (g(x_i^j)) \neq F_\theta(x_i^j)\} )
\end{align*}
\end{proof}

\section{Evaluation on Real-World Dataset}

We consider a real-world dataset, Tiny-Imagenet, to evaluate our method. Because RoBal does not provide the hyperparameters for this dataset, we compare \SysName with PGD-AT method. We randomly downsample the training set, to obtain a long-tailed subset. In detail, the number of data in each class decreases linearly. In our subset, the UR is 100, which means the class with the most data contains 500 images, and the class with the fewest data contains 5 images. We train a ResNet-18 on this subset with PGD-AT and \SysName, respectively. The results can be found in Table~\ref{tab:imagenet}. From the results, we can find that our method performs better than the baseline method with about 2\%~3\% improvement on clean accuracy and robust accuracy. Therefore, our approach is a better choice for solving real-world long-tail challenges.

\begin{table}[h]
\centering
\begin{adjustbox}{max width=1.0\linewidth}
\begin{tabular}{c|c|c|c|c|c}
 \Xhline{2pt}
\textbf{Model} & \textbf{Clean Accuracy} & \textbf{PGD-20} & \textbf{PGD-100} & \textbf{CW-100} & \textbf{AA} \\ \hline
PGD-AT & 40.29 & 30.66 & 30.65 & 29.10 & 28.73 \\ \hline
\SysName & \textbf{43.09} & \textbf{33.00} & \textbf{33.00} & \textbf{31.16} & \textbf{30.74} \\
 \Xhline{2pt}
\end{tabular}
\end{adjustbox}
\caption{Evaluation on the real-world dataset, Tiny-Imagenet. The attack budget is $\epsilon=2/255$.}
\label{tab:imagenet}
\vspace{-10pt}
\end{table}

\section{Adversarial Training on Unbalanced Dataset}
\label{ap:at}

\renewcommand{\arraystretch}{1.2}
\begin{table*}[hbt]
\centering
\begin{adjustbox}{max width=1.0\linewidth}
\begin{tabular}{c|c|c|c|c|c}
 \Xhline{2pt}
\textbf{Method} & \textbf{Clean Accuracy} & \textbf{PGD-20} & \textbf{PGD-100} & \textbf{CW-100} & \textbf{AA} \\ \hline
PGD-AT & 51.28(1.34) & 29.57(0.17) & 29.47(0.15) & 29.05(0.07) & 27.71(0.15) \\
TRADES & 45.55(0.89) & 28.24(0.21) & 28.21(0.20) & 27.29(0.17) & 26.78(0.10) \\
PGD-AWP & 36.45(3.40) & 26.52(1.64) & 26.47(1.62) & 26.04(1.54) & 25.23(1.47) \\
TRADES-AWP & 41.30(0.47) & 27.04(0.08) & 27.00(0.07) & 25.65(0.08) & 25.37(0.07) \\
MART & 41.76(0.63) & 29.18(0.12) & 29.14(0.13) & 27.04(0.06) & 26.06(0.03) \\
 \Xhline{2pt}
\end{tabular}
\end{adjustbox}
\caption{Results on CIFAR-10-LT (UR=50) with different training strategies. }
\label{tab:a1}
\vspace{-10pt}
\end{table*}

To explore the effectiveness of adversarial training strategies proposed on balanced datasets, we compare recent adversarial training methods in Table~\ref{tab:a1}. The results indicate that improving robustness on a balanced dataset is non-trivial, but these improvements cannot be expressed under an unbalanced dataset. Furthermore, we find that the simplest and the most straightforward method, PGD-AT, obtains the best results. On the other hand, methods adopting clean samples to train models, like TRADES and MART, will achieve lower clean accuracy, as the unbalanced data will harm the model's accuracy on the balanced test set.

In Figure~\ref{fig:tsap}, the t-SNE results prove that each class is assigned an area of a similar size in the feature space when the model is trained on balanced data. But, if the model is trained on unbalanced data, the areas for head classes expand and encroach areas that should belong to tail classes, causing the area of tail features to shrink, which represents the \textbf{unbalanced feature embedding space}. As a result, the performance and generalizability for tail classes decrease. 

To alleviate the unbalance problem, we replace the cross-entropy loss in TRADES and MART with Balanced Softmax Loss (BSL). However, in our experiments, we find that BSL will make the model not converge. The reason can be that the gradient directions of BSL and KL divergence are contradicted. So, in our paper, we mainly consider enhancing the PGD-AT method to better fit the unbalanced datasets.

\begin{figure}[hbt] 
\hspace{30pt}
  \begin{subfigure}[b]{0.345\linewidth}
    \centering
    \includegraphics[width=\linewidth]{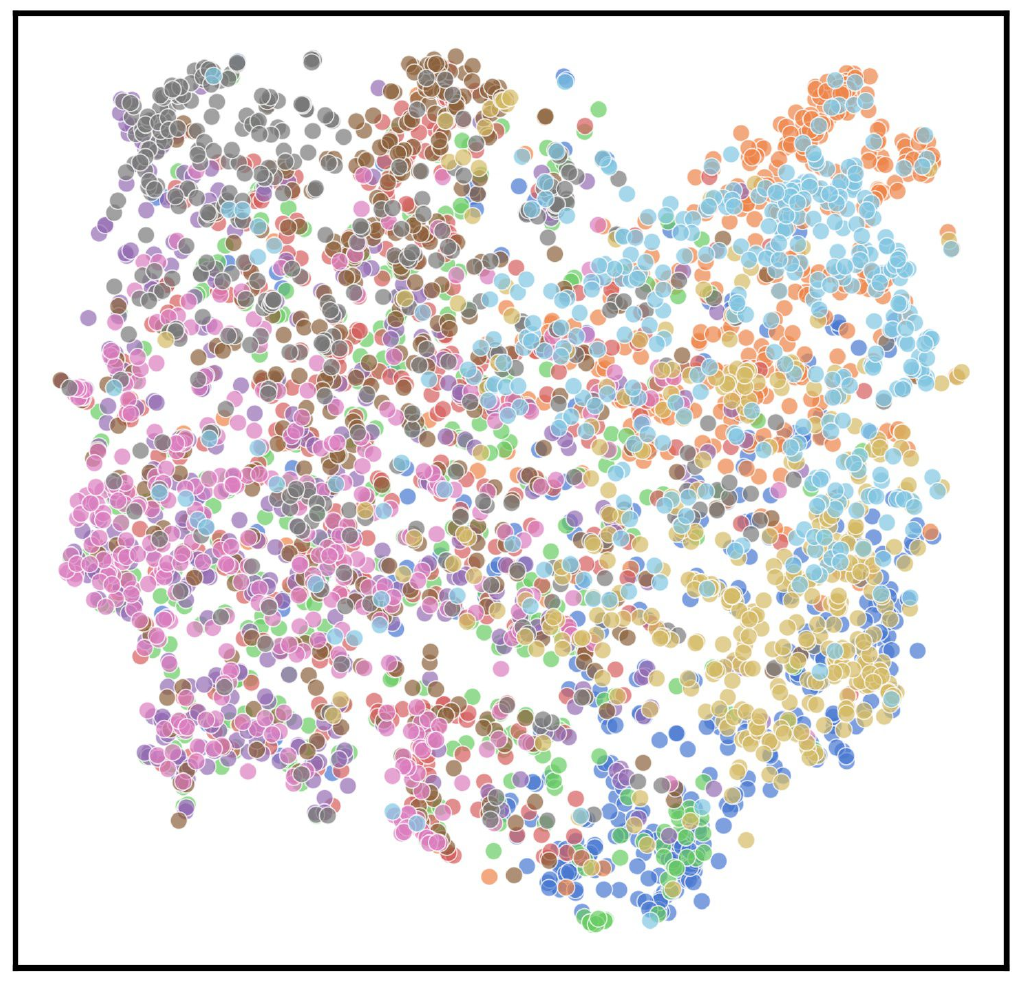} 
    \caption{Balanced dataset}
    \label{fig:tsap1} 
  \end{subfigure}
    \begin{subfigure}[b]{0.5\linewidth}
    \centering
    \includegraphics[width=\linewidth]{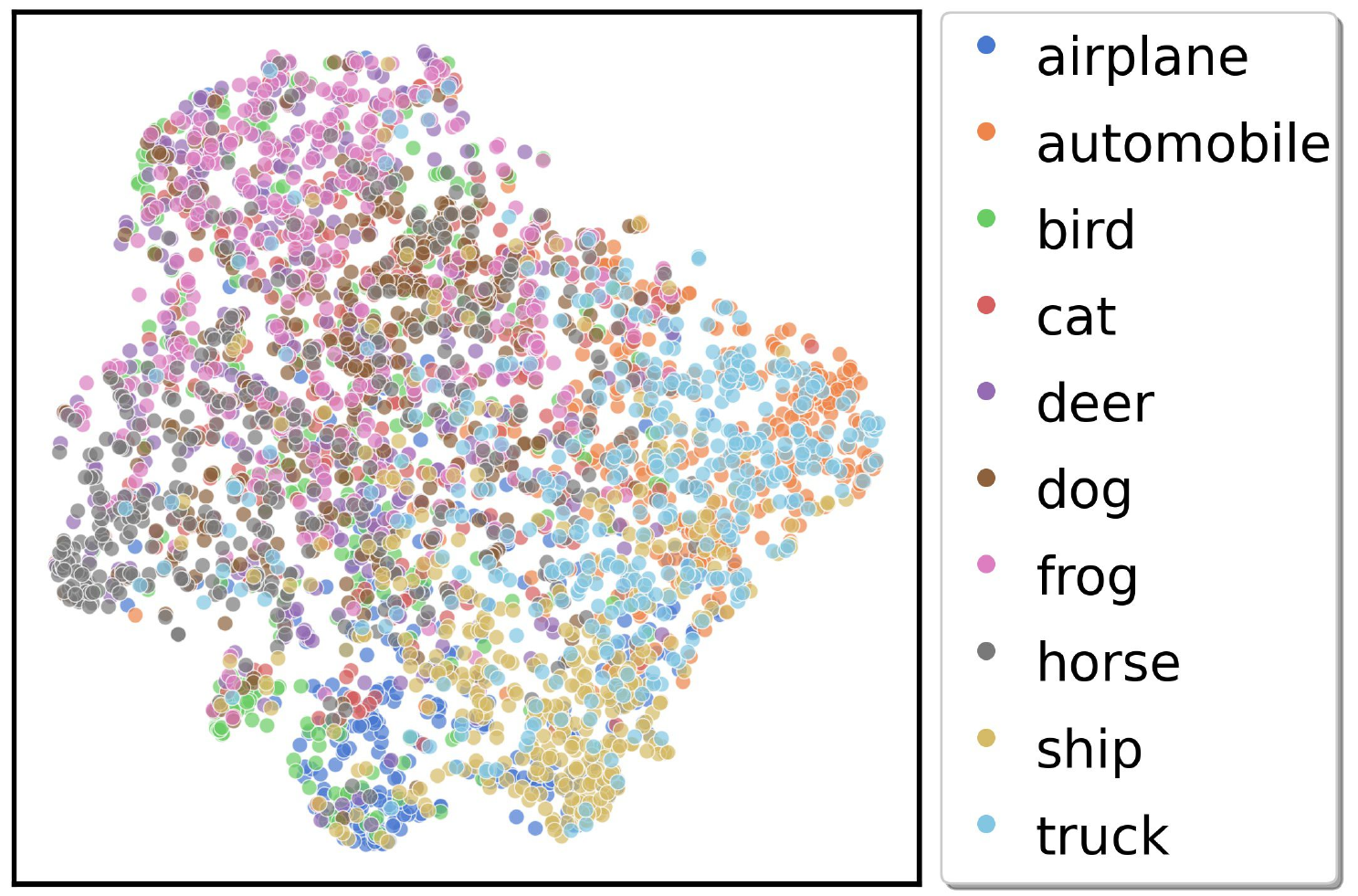} 
    \caption{Unbalanced dataset} 
    \label{fig:tsap2} 
  \end{subfigure} 
  \caption{AE's feature t-SNE results for ResNet-18 trained with PGD-AT on balanced and unbalanced CIFAR-10.}
  \label{fig:tsap} 
\end{figure}

\section{Studying Data Hunger and Data Unbalance}
\label{ap:hunger}

\begin{table*}[hbt]
\centering
\begin{adjustbox}{max width=1.0\linewidth}
\begin{tabular}{c|c|c|c|c|c|c}
 \Xhline{2pt}
\textbf{UR} & \textbf{Method} & \textbf{Clean Accuracy} & \textbf{PGD-20} & \textbf{PGD-100} & \textbf{CW-100} & \textbf{AA} \\ \hline
\multirow{3}{*}{10} & PGD-AT (BS) & 77.12(0.73) & 44.73(0.20) & 44.49(0.20) & 43.81(0.26) & 41.50(0.19) \\
 & PGD-AT & 75.27(0.32) & 42.66(0.20) & 42.36(0.20) & 41.18(0.21) & 38.81(0.10) \\
 & \SysName & 75.20(0.03) & 42.97(0.17) & 42.76(0.19) & 41.52(0.22) & 39.25(0.21) \\  \Xhline{1pt}
\multirow{3}{*}{20} & PGD-AT (BS) & 75.61(0.10) & 43.37(0.15) & 43.22(0.17) & 42.12(0.12) & 40.01(0.03) \\
 & PGD-AT & 72.31(0.24) & 39.79(0.31) & 39.61(0.30) & 38.42(0.06) & 36.18(0.03) \\
 & \SysName & 72.73(0.50) & 40.57(0.15) & 40.41(0.12) & 38.55(0.29) & 36.53(0.21) \\ \Xhline{1pt}
\multirow{3}{*}{50} & PGD-AT (BS) & 72.98(0.74) & 41.14(0.26) & 40.89(0.30) & 39.92(0.49) & 37.75(0.40) \\
 & PGD-AT & 66.99(0.17) & 35.23(0.45) & 35.01(0.43) & 33.17(0.37) & 31.15(0.49) \\
 & \SysName & 67.33(0.45) & 36.20(0.06) & 36.02(0.09) & 33.98(0.23) & 32.08(0.12) \\ \Xhline{1pt}
\multirow{3}{*}{100} & PGD-AT (BS) & 72.83(0.53) & 40.25(0.35) & 40.10(0.45) & 39.29(0.19) & 37.24(0.18) \\
 & PGD-AT & 62.70(0.52) & 32.91(0.17) & 32.73(0.19) & 30.45(0.15) & 28.60(0.21) \\
 & \SysName & 63.92(0.68) & 32.84(0.07) & 32.69(0.15) & 30.73(0.38) & 28.90(0.33) \\
 \Xhline{2pt}
\end{tabular}
\end{adjustbox}
\caption{Results on CIFAR-10 and CIFAR-10-LT with different URs.  BSL loss is adopted for PGD-AT and \SysName.}
\vspace{-10pt}
\label{tab:apur}
\end{table*}

\begin{table*}[hbt]
\centering
\begin{adjustbox}{max width=1.0\linewidth}
\begin{tabular}{c|c|c|c|c|c|c}
 \Xhline{2pt}
\textbf{UR} & \textbf{Method} & \textbf{Clean Accuracy} & \textbf{PGD-20} & \textbf{PGD-100} & \textbf{CW-100} & \textbf{AA} \\ \hline
\multirow{3}{*}{10} & PGD-AT (BS) & 48.32(0.50) & 20.08(0.24) & 19.95(0.25) & 18.88(0.28) & 17.44(0.21) \\
 & PGD-AT & 45.96(0.49) & 18.85(0.19) & 18.73(0.17) & 17.70(0.13) & 16.21(0.13) \\
 & \SysName & 45.94(0.15) & 19.26(0.18) & 19.16(0.18) & 17.99(0.09) & 16.58(0.06) \\  \Xhline{1pt}
\multirow{3}{*}{20} & PGD-AT (BS) & 45.14(0.28) & 17.95(0.19) & 17.82(0.17) & 17.15(0.19) & 15.80(0.23) \\
 & PGD-AT & 42.45(0.53) & 16.36(0.13) & 16.24(0.14) & 15.47(0.17) & 14.17(0.09) \\
 & \SysName & 41.98(0.21) & 16.84(0.10) & 16.72(0.12) & 15.77(0.23) & 14.45(0.20) \\ \Xhline{1pt}
\multirow{3}{*}{50} & PGD-AT (BS) & 42.86(0.37) & 16.52(0.26) & 16.38(0.24) & 15.86(0.14) & 14.54(0.05) \\
 & PGD-AT & 37.70(0.12) & 13.95(0.07) & 13.86(0.05) & 13.17(0.11) & 12.10(0.02) \\
 & \SysName & 37.43(0.37) & 14.25(0.22) & 14.18(0.26) & 13.38(0.15) & 12.32(0.17) \\
 \Xhline{2pt}
\end{tabular}
\end{adjustbox}
\caption{Results on CIFAR-100 and CIFAR-100-LT with different URs. BSL loss is adopted for PGD-AT and \SysName.}
\vspace{-10pt}
\label{tab:apa2}
\end{table*}

\begin{table*}[hbt]
\centering
\begin{adjustbox}{max width=1.0\linewidth}
\begin{tabular}{cc|c|c|c|c|c}
 \Xhline{2pt}
\multicolumn{2}{c|}{Method} & Clean Accuracy & PGD-20 & PGD-100 & CW-100 & AA \\ \hline
\multicolumn{1}{c|}{\multirow{3}{*}{ResNet}} & PGD-AT (BS) & 72.98(0.74) & 41.14(0.26) & 40.89(0.30) & 39.92(0.49) & 37.75(0.40) \\
\multicolumn{1}{c|}{} & PGD-AT & 66.99(0.17) & 35.23(0.45) & 35.01(0.43) & 33.17(0.37 & 31.15(0.49) \\
\multicolumn{1}{c|}{} & \SysName & 67.33(0.45) & 36.20(0.06) & 36.02(0.09) & 33.98(0.23) & 32.08(0.12) \\  \Xhline{1pt}
\multicolumn{1}{c|}{\multirow{3}{*}{WRN}} & PGD-AT (BS) & 78.65(0.15) & 42.42(0.33) & 42.05(0.32) & 42.21(0.08) & 39.86(0.37) \\
\multicolumn{1}{c|}{} & PGD-AT & 72.38(0.30) & 35.93(0.10) & 35.64(0.04) & 34.93(0.14) & 32.84(0.19) \\
\multicolumn{1}{c|}{} & \SysName & 72.58(0.31) & 36.53(0.31) & 36.35(0.32) & 35.30(0.37) & 33.37(0.37) \\
 \Xhline{2pt}
\end{tabular}
\end{adjustbox}
\caption{Results on CIFAR-10 and CIFAR-10-LT (UR=50) with different model structures. BSL loss is adopted for PGD-AT and \SysName.}
\label{tab:apcifar-10}
\vspace{-10pt}
\end{table*}

\begin{table*}[hbt]
\centering
\begin{adjustbox}{max width=1.0\linewidth}
\begin{tabular}{cc|c|c|c|c|c}
 \Xhline{2pt}
\multicolumn{2}{c|}{Method} & Clean Accuracy & PGD-20 & PGD-100 & CW-100 & AA \\ \hline
\multicolumn{1}{c|}{\multirow{3}{*}{ResNet}} & PGD-AT (BS) & 48.32(0.50) & 20.08(0.24) & 19.95(0.25) & 18.88(0.28) & 17.44(0.21) \\
\multicolumn{1}{c|}{} & PGD-AT & 45.96(0.49) & 18.85(0.19) & 18.73(0.17) & 17.70(0.13) & 16.21(0.13) \\
\multicolumn{1}{c|}{} & \SysName & 45.94(0.15) & 19.26(0.18) & 19.16(0.18) & 17.99(0.09) & 16.58(0.06) \\  \Xhline{1pt}
\multicolumn{1}{c|}{\multirow{3}{*}{WRN}} & PGD-AT (BS) & 52.33(0.42) & 21.95(0.10) & 21.77(0.14) & 21.41(0.20) & 19.58(0.17) \\
\multicolumn{1}{c|}{} & PGD-AT & 50.07(0.25) & 20.79(0.39) & 20.69(0.38) & 20.17(0.27) & 18.32(0.28) \\
\multicolumn{1}{c|}{} & \SysName & 49.99(0.18) & 20.85(0.16) & 20.71(0.20) & 20.18(0.09) & 18.35(0.17) \\
 \Xhline{2pt}
\end{tabular}
\end{adjustbox}
\caption{Results on CIFAR-100 and CIFAR-100-LT (UR=10) with different model structures.  BSL loss is adopted for PGD-AT and \SysName.}
\label{tab:apcifar-100}
\vspace{-10pt}
\end{table*}

In this part, we further examine the effects of the data hunger and the data unbalance on the model robustness, which is explored to construct an experimental upper bound on the robustness of the long-tailed adversarial training methods. To be specific, the data hunger raises from the insufficient data from the body classes and tail classes, which is one of the impacts of the long-tailed datasets. And another one is the data unbalance. To exclusively study the data hunger in a balanced dataset, for a given unbalanced ratio,  we sample the same number of samples as the long-tail dataset but form them into balanced small (BS) datasets. We then train models on this dataset with PGD-AT to learn an experimental upper bound, which is represented as ``PGD-AT (BS)'' in our experiments. When we train models with PGD-AT (BS) the loss function used to optimize models is Cross-Entropy loss. On the other hand, when we train models on unbalanced datasets, the basic loss function used to optimize models is BSL.

Comparing the results of models trained under balanced datasets and unbalanced datasets in Tables~\ref{tab:apur}--~\ref{tab:apcifar-100}, it is clear that the models train on unbalanced datasets suffer from a bigger reduction when the number of training samples decreases, which means that the data unbalance harms the model's robustness in a larger degree than the data hunger. Training models on unbalanced data is more challenging than training models on small but balanced data under adversarial scenarios for different model structures and datasets. It is reasonable, because in the long-tailed datasets, there are fewer data in the tail classes, making the model unable to learn much information for such classes. Furthermore, compared with training a model on CIFAR-10, when training a model on a more complex dataset, such as CIFAR-100, the performance decrease is less than expected, which will be studied in our future work. On the other hand, the experimental results of PGD-AT (BS) can be seen as upper bounds for the models trained on same-size unbalanced datasets.

\section{Results under $l_2$-norm Attacks}
\label{ap:l2}

\begin{table*}[hbt]
\centering
\begin{adjustbox}{max width=1.0\linewidth}
\begin{tabular}{c|c|c|c|c|c}
 \Xhline{2pt}
\textbf{UR} & \textbf{Method} & \textbf{Clean Accuracy} & \textbf{PGD-20} & \textbf{PGD-100} & \textbf{CW} \\ \hline
\multirow{3}{*}{10} & PGD-AT & 75.27(0.32) & 30.92(0.47) & 28.82(0.63) & 70.76(0.20) \\
 & RoBal & 75.33(0.39) & 33.15/\textcolor{red}{28.26}(0.62) & 31.43/\textcolor{red}{26.18}(0.80) & 71.36(0.39) \\
 & \SysName & 75.20(0.03) & 30.82(0.60) & 28.71(0.74) & 71.02(0.18) \\  \Xhline{1pt}
\multirow{3}{*}{20} & PGD-AT & 72.31(0.24) & 29.23(0.50) & 27.24(0.55) & 67.87(0.26) \\
 & RoBal & 71.92(0.62) & 31.79/\textcolor{red}{27.13}(0.37) & 30.34/\textcolor{red}{25.38}(0.44) & 67.95(0.58) \\
 & \SysName & 72.73(0.50) & 29.27(0.58) & 27.44(0.59) & 68.13(0.43) \\ \Xhline{1pt}
\multirow{3}{*}{50} & PGD-AT & 66.99(0.17) & 26.75(0.31) & 25.20(0.31) & 62.54(0.14) \\
 & RoBal & 66.08(0.69) & 29.22/\textcolor{red}{24.17}(0.68) & 27.97/\textcolor{red}{22.83}(0.60) & 62.03(0.65) \\
 & \SysName & 67.33(0.45) & 27.45(0.30) & 25.91(0.40) & 63.08(0.23) \\ \Xhline{1pt}
\multirow{3}{*}{100} & PGD-AT & 62.70(0.52) & 24.91(0.46) & 23.49(0.42) & 58.06(0.40) \\
 & RoBal & 60.11(0.62) & 27.77/\textcolor{red}{23.33}(0.46) & 26.48/\textcolor{red}{21.91}(0.43) & 56.34(0.28) \\
 & \SysName & 63.92(0.68) & 24.63(0.21) & 23.24(0.21) & 59.17(0.49) \\
 \Xhline{2pt}
\end{tabular}
\end{adjustbox}
\caption{Results on CIFAR-10-LT with different URs under $l_2$-norm attacks. \textcolor{red}{Red} numbers represent the results under our adaptive attack.   BSL loss is adopted for PGD-AT and \SysName.}
\label{tab:a3}
\vspace{-10pt}
\end{table*}

In Table~\ref{tab:a3}, we show the results of models under $l_2$-norm attacks. For the PGD attacks, the max perturbation size is $\epsilon=1.0$, and the step length is $\alpha=0.2$. We consider the 20-step attack, PGD-20, and the 100-step attack, PGD-100. For the C\&W attack, we follow its official implementation. The results confirm that our \SysName can improve the model's robustness under different threat models. On the other hand, the gradient obfuscation is more serious under $l_2$-norm attacks, so our adaptive attacks achieve better results.

\section{Comparing with RoBal}
\label{ap:cr}

\begin{table*}[hbt]
\centering
\begin{adjustbox}{max width=1.0\linewidth}
\begin{tabular}{cc|c|c|c|c|c}
 \Xhline{2pt}
\multicolumn{2}{c|}{Method} & Clean Accuracy & PGD-20 & PGD-100 & CW-100 & AA \\ \hline
\multicolumn{1}{c|}{\multirow{2}{*}{ResNet}} & RoBal & 43.47(0.31) & \begin{tabular}[c]{@{}c@{}}20.55(0.20)\\ Adaptive: \textcolor{red}{18.49}\end{tabular} & \begin{tabular}[c]{@{}c@{}}20.49(0.20)\\ Adaptive: \textcolor{red}{18.31}\end{tabular} & 18.12(0.23) & 16.86(0.10) \\ \cline{2-7} 
\multicolumn{1}{c|}{} & \SysName & 45.94(0.15) & 19.26(0.18) & 19.16(0.18) & 17.99(0.09) & 16.58(0.06) \\ \hline
\multicolumn{1}{c|}{\multirow{2}{*}{WRN}} & RoBal & 48.84(0.24) & \begin{tabular}[c]{@{}c@{}}21.45(0.18)\\ Adaptive: \textcolor{red}{19.29}\end{tabular} & \begin{tabular}[c]{@{}c@{}}21.44(0.21)\\ Adaptive: \textcolor{red}{19.19}\end{tabular} & 19.71(0.09) & 18.21(0.02) \\ \cline{2-7} 
\multicolumn{1}{c|}{} & \SysName & 49.99(0.18) & 20.85(0.16) & 20.71(0.20) & 20.18(0.09) & 18.35(0.17) \\
 \Xhline{2pt}
\end{tabular}
\end{adjustbox}
\caption{Results on CIFAR-100-LT (UR=10) with different model structures. \textcolor{red}{Red} numbers represent the results under our adaptive attack.   BSL loss is adopted for \SysName.}
\vspace{-10pt}
\label{tab:cifar-100-robal-ar}
\end{table*}

\begin{figure}
\centering
    \includegraphics[width=\linewidth]{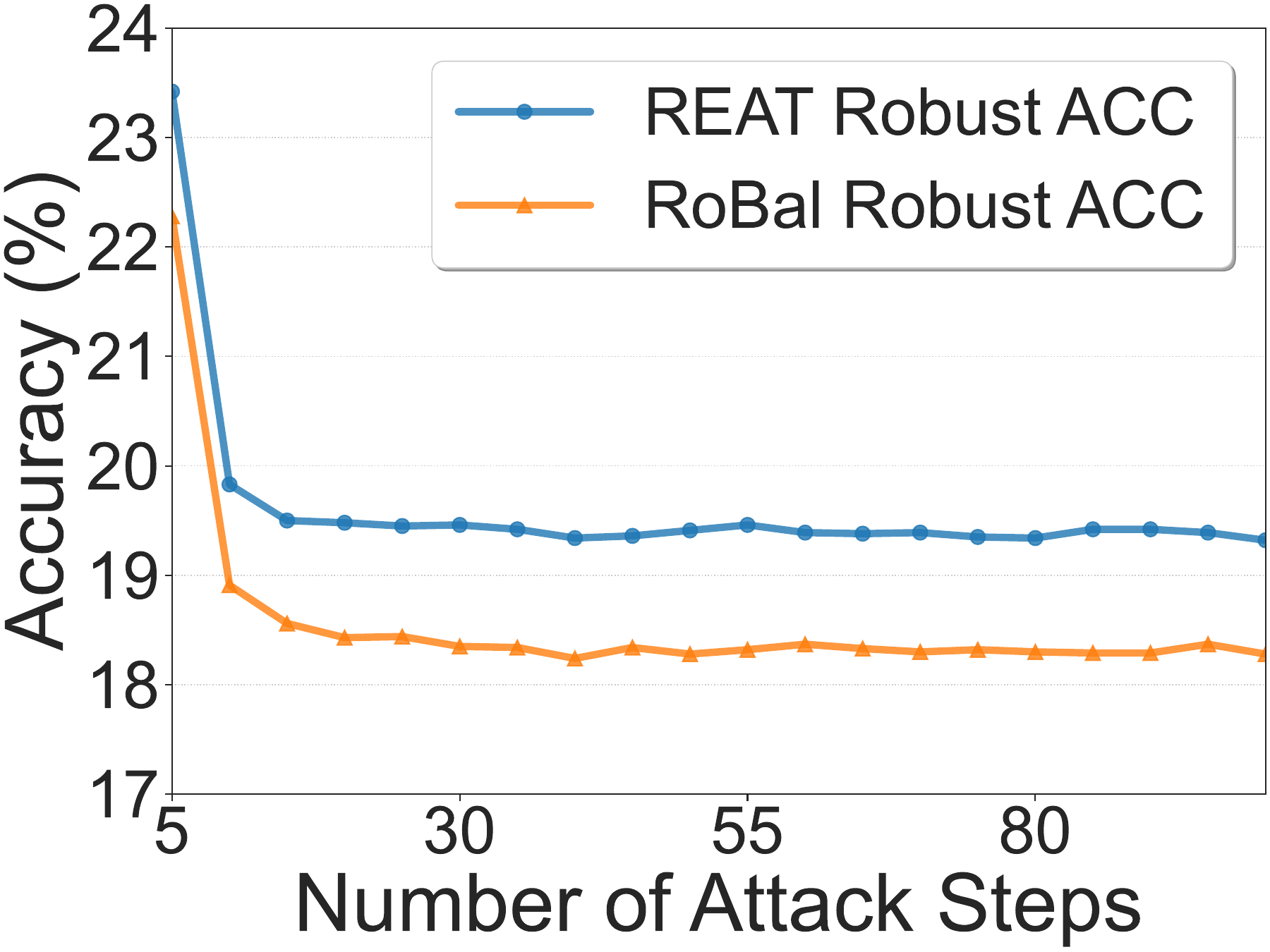}
    \caption{Accuracy of Attack Steps on CIFAR-100-LT (UR=10).}
    \label{fig:budget_ap} 
    \vspace{-20pt}
\end{figure}

\begin{figure*}[hbt]
        \centering
        \includegraphics[width=\linewidth]{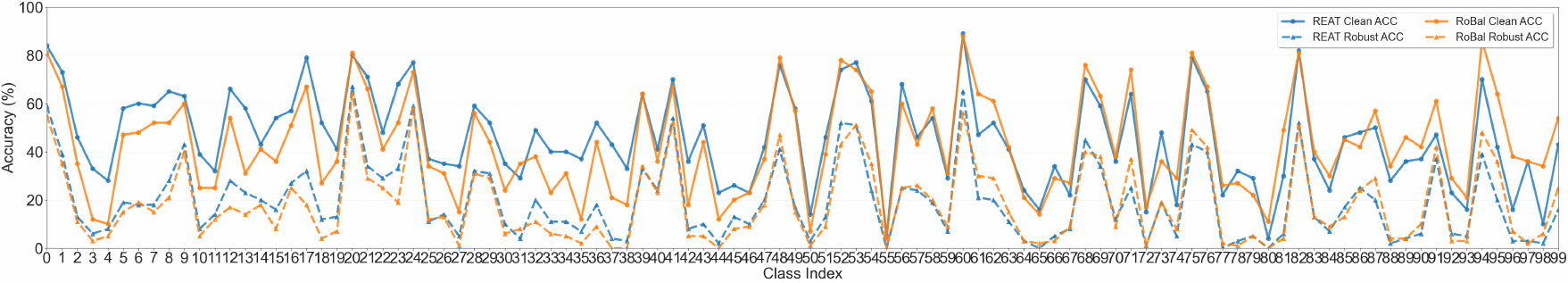}
        \caption{Accuracy of Each Class on CIFAR-100-LT (UR=10).}
        \label{fig:class_ap} 
        \vspace{-15pt}
\end{figure*}

\begin{figure*}[hbt] 
  \begin{subfigure}[b]{0.22\linewidth}
    \centering
    \includegraphics[width=\linewidth]{tsne/pgd_small_seed0_test_adv_no_legend_pca50_tsne.pdf} 
    \caption{PGD-AT: balanced} 
    \label{fig:t1} 
  \end{subfigure}
    \begin{subfigure}[b]{0.22\linewidth}
    \centering
    \includegraphics[width=\linewidth]{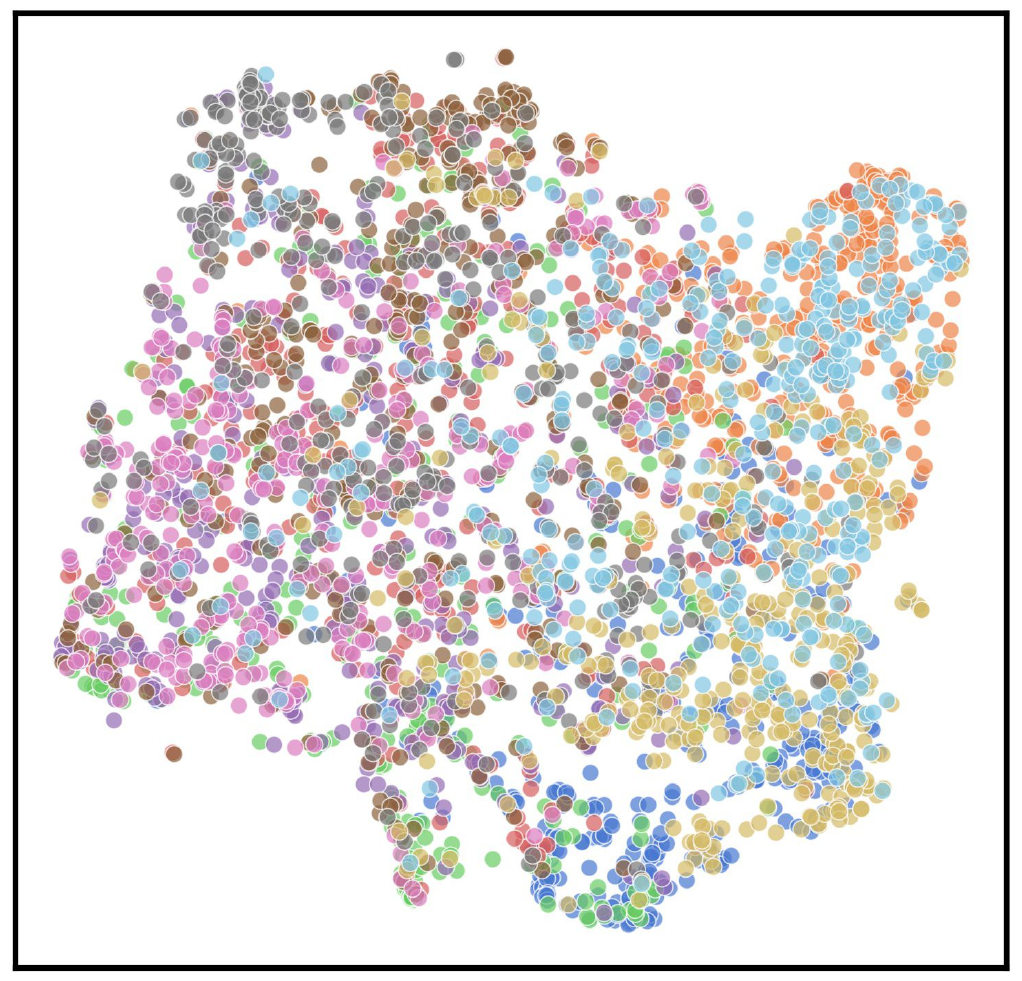} 
    \caption{PGD-AT (BSL)} 
    \label{fig:t2} 
  \end{subfigure} 
    \begin{subfigure}[b]{0.22\linewidth}
    \centering
    \includegraphics[width=\linewidth]{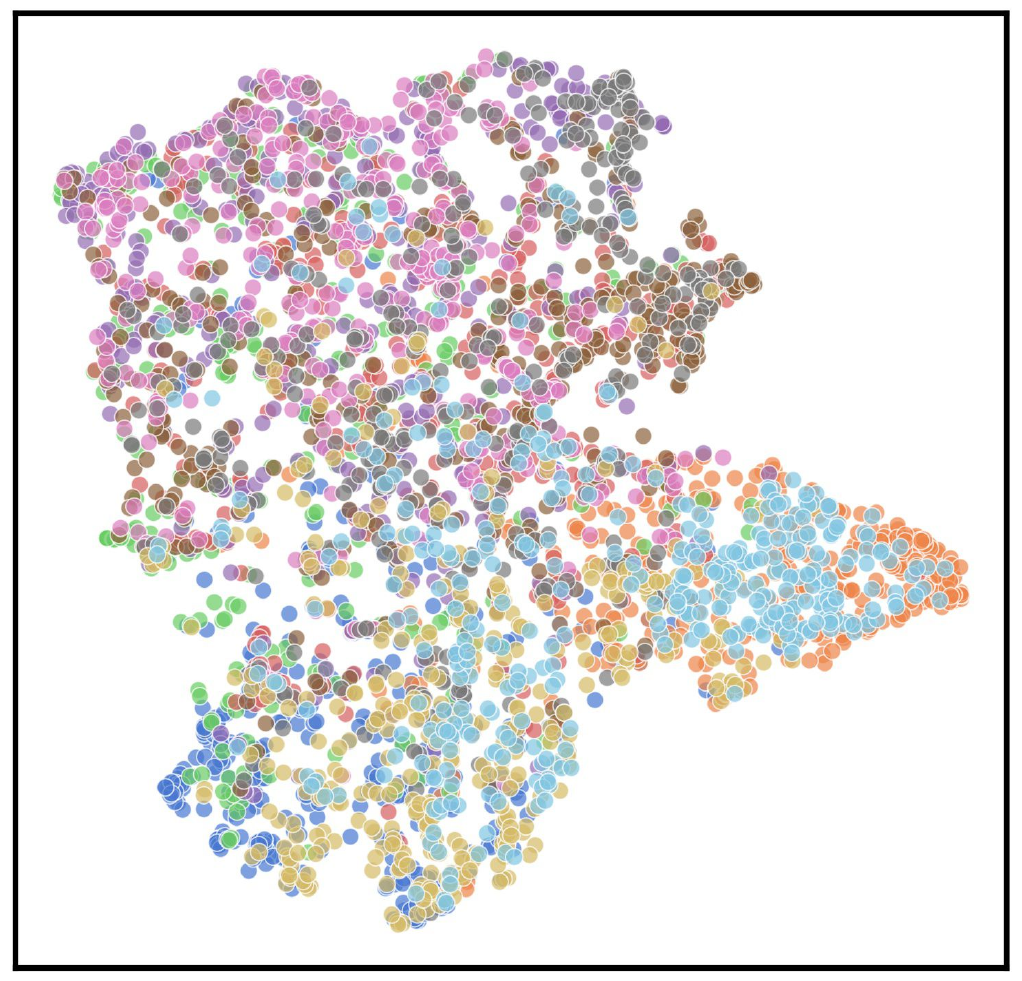} 
    \caption{RoBal}
    \label{fig:t3} 
  \end{subfigure} 
      \begin{subfigure}[b]{0.22\linewidth}
    \centering
    \includegraphics[width=1.5\linewidth, height=105pt]{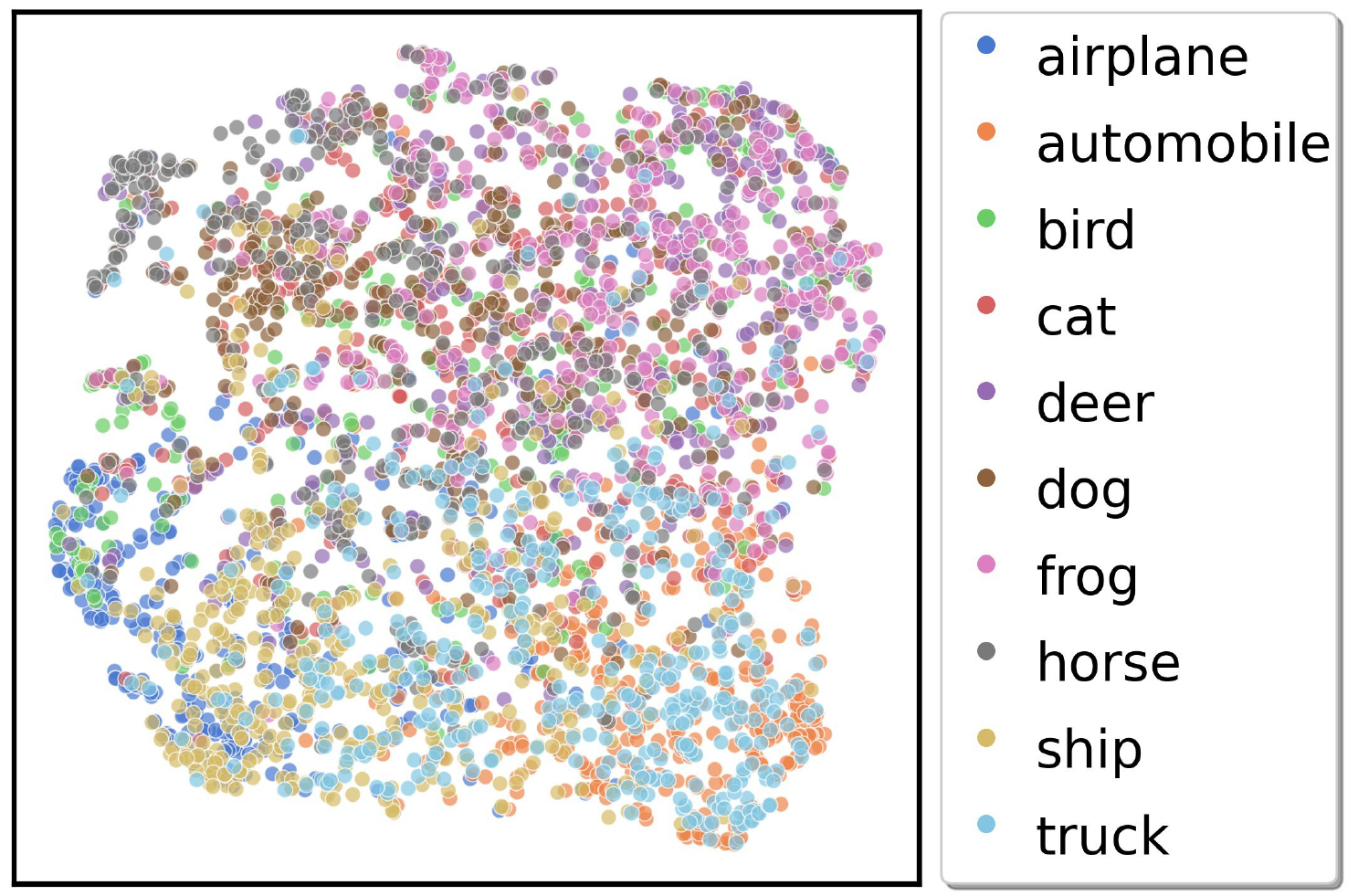}
    \caption{\SysName (BSL)}
    \label{fig:t4} 
  \end{subfigure} 
  \vspace{-5pt}
  \caption{AE's feature map results with different strategies. (a) is trained with the balanced dataset (CIFAR-10) while the rest three are trained with the unbalanced dataset (CIFAR-10-LT, UR=50).}
  \label{fig:tsne} 
  \vspace{-15pt}
\end{figure*}

\noindent\textbf{Varying Datasets}. 
Similar to our main paper, we illustrate the robust accuracy under the different numbers of PGD attack steps of ResNet-18 on CIFAR-100-LT (UR=10) in Figure~\ref{fig:budget_ap} in this section. The results prove that our \SysName outperforms RoBal under all attack budgets. In Figure~\ref{fig:class_ap}, we plot the accuracy of ResNet-18 on CIFAR-100-LT (UR=10) under the PGD-20 attack for each class. The results indicate that our \SysName can achieve higher clean accuracy and robust accuracy on ``body'' classes, which is consistent with the conclusion in the main paper.

\noindent\textbf{Varying Model Structures}. To show the superiority of \SysName on different model structures, we compare the results of RoBal and \SysName on ResNet-18 and WideResNet-28-10, respectively. The results in Table~\ref{tab:cifar-100-robal-ar} prove that models trained with \SysName lead models trained with RoBal on both clean accuracy and robustness, which means \SysName is a better training strategy for different model structures.

\noindent\textbf{Interpretation}. We choose the configurations of CIFAR-10-LT (UR=50) and ResNet-18. Then, we plot the feature embedding space with the t-SNE tool for models trained with different strategies in Figure~\ref{fig:tsne}. We first generate AEs with the PGD-20 attack on the test set and use t-SNE to plot the feature distribution for AEs. ResNet-18 is adopted as the model architecture. Figure \ref{fig:t1} is the feature result for PGD-AT over the balanced dataset CIFAR-10. We observe that samples from different classes are not quite overlapped with each other in the feature space, making them easier to be classified. In contrast, Figures \ref{fig:t2} and \ref{fig:t3} show the results for PGD-AT (BSL loss) and RoBal over the unbalanced dataset CIFAR-10. We observe that there are more samples from different classes entangled together in their feature embeddings, which can harm the model's robustness. Figure \ref{fig:t4} shows the results of our \SysName under the same unbalanced setting. We can see the feature space is more similar to the one obtained from the balanced dataset (Figure \ref{fig:t1}). This explains the effectiveness of \SysName in enhancing the model robustness and clean accuracy from the feature perspective. 

\section{AE Prediction Distribution}
\label{ap:pd}

\begin{figure*}[hbt] 
    \begin{subfigure}[b]{0.5\linewidth}
    \centering
    \includegraphics[width=\linewidth]{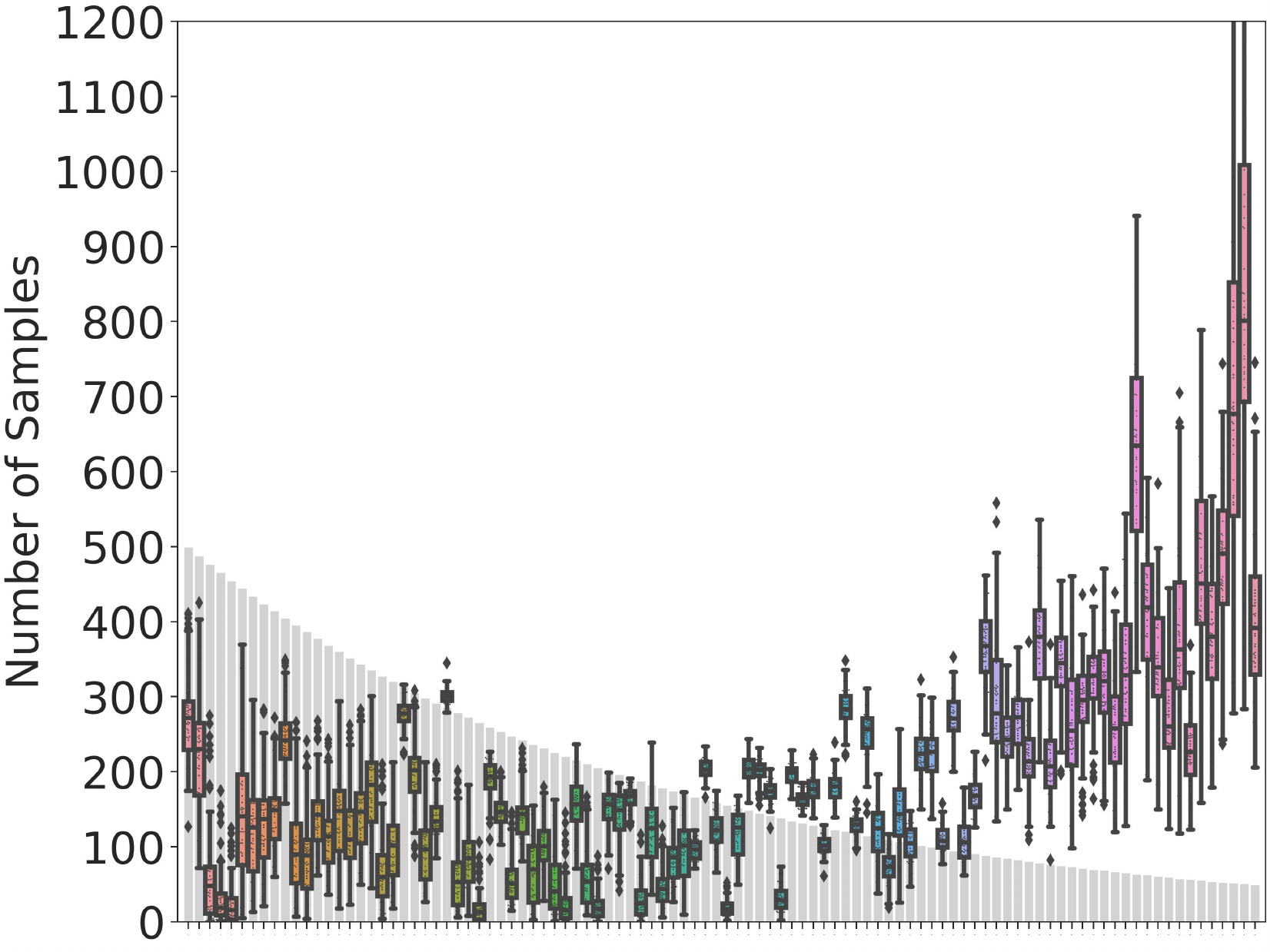} 
    \vspace{-15pt}
    \caption{RoBal}
    \label{fig:robal} 
  \end{subfigure} 
      \begin{subfigure}[b]{0.5\linewidth}
    \centering
    \includegraphics[width=\linewidth]{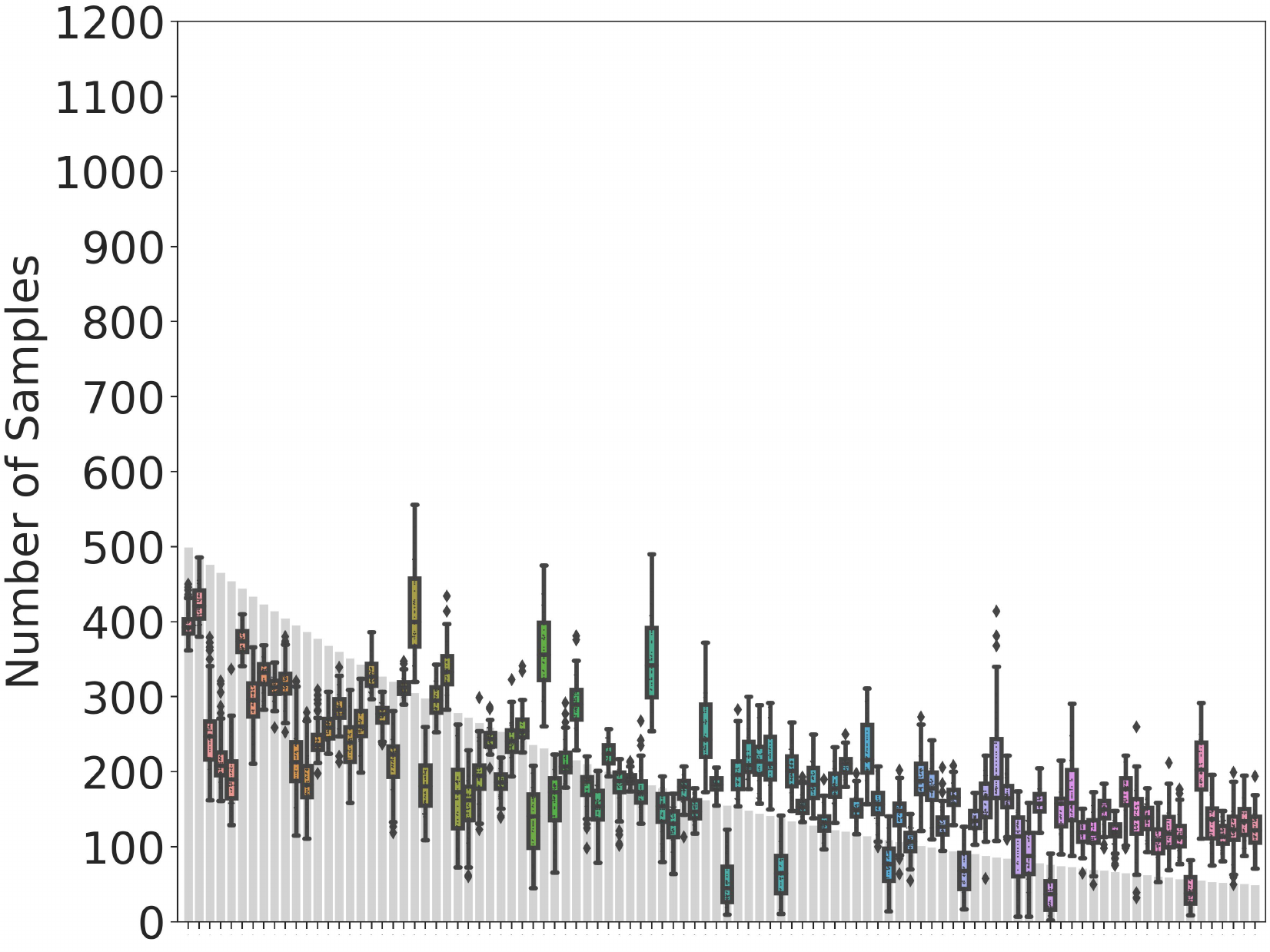}
    \vspace{-15pt}
    \caption{\SysName}
    \label{fig:rat} 
  \end{subfigure} 
  \caption{The distribution of model predictions for AEs during the training process on CIFAR-100-LT (UR=10). Clean label distributions are shown by gray bars.}
  \label{fig:a6} 
  \vspace{-15pt}
\end{figure*}

In Figure~\ref{fig:a6}, we compare the AE distribution on CIFAR-100-LT (UR=10), when training models with RoBal and \SysName, respectively. The results prove that RoBal will cause unbalanced AE distribution when the number of classes increases. There are more samples predicted as tail classes by the model. However, our \SysName can keep the balanced AE distribution and obtain better results.

\section{Training Cost Overhead of \SysName}
\label{ap:overhead}

We compare the training time overhead of \SysName compared with PGD-AT (BSL is adopted) method and RoBal on one single V100 GPU card. The results are shown in Table~\ref{tab:tc}. When we train a ResNet-18 on CIFAR-10-LT (UR=50), the training time overhead for one epoch is about 8 seconds. When we train a ResNet-18 on CIFAR-10-LT (UR=100), the training time overhead for one epoch is about 5 seconds. So, our \SysName is efficient on long-tailed datasets and does not increase too much training time.

\begin{table}[hbt]
\centering
\begin{adjustbox}{max width=1.0\linewidth}
\begin{tabular}{c|c|ccc}
 \Xhline{2pt}
\multirow{2}{*}{\textbf{Dataset}} & \multirow{2}{*}{\textbf{UR}} & \multicolumn{3}{c}{\textbf{Time Cost (Secs)}} \\ \cline{3-5} 
 &  & \multicolumn{1}{c|}{PGD-AT} & \multicolumn{1}{c|}{RoBal} & \SysName \\ \hline
\multirow{2}{*}{CIFAR-10-LT} 
 & 50 & \multicolumn{1}{c|}{46} & \multicolumn{1}{c|}{50} & 54 \\ \cline{2-5} 
 & 100 & \multicolumn{1}{c|}{42} & \multicolumn{1}{c|}{45} & 47 \\  \Xhline{1pt}
\multirow{2}{*}{CIFAR-100-LT} 
 & 20 & \multicolumn{1}{c|}{53} & \multicolumn{1}{c|}{58} & 75 \\ \cline{2-5} 
 & 50 & \multicolumn{1}{c|}{42} & \multicolumn{1}{c|}{45} & 52 \\
 \Xhline{2pt}
\end{tabular}
\end{adjustbox}
\caption{Time cost (seconds) for one training epoch on CIFAR-10-LT and CIFAR-100-LT. The model is ResNet-18. BSL loss is adopted for PGD-AT and \SysName.}
\vspace{-5pt}
\label{tab:tc}
\end{table}

\end{document}